\documentclass[10pt,journal,compsoc]{IEEEtran}


\usepackage[utf8]{inputenc} 
\usepackage{hyperref}       
\usepackage{url}            
\usepackage{booktabs}       
\usepackage{amsfonts}       
\usepackage{nicefrac}       
\usepackage{microtype}      
\usepackage[dvipsnames]{xcolor}
\usepackage{subcaption}
\usepackage{graphicx}
\usepackage{natbib}
\usepackage{amsthm}
\usepackage[ruled]{algorithm2e}
\usepackage{float}
\usepackage{makecell}
\newtheorem{theorem}{Theorem}
\newtheorem{corollary}{Corollary}
\newtheorem{lemma}{Lemma}
\newtheorem{definition}{Definition}
\newtheorem{assumption}{Assumption}
\newtheorem{remark}{Remark}

\usepackage{bm}
\usepackage{colortbl}
\usepackage{enumitem}
\usepackage{framed}
\usepackage{tcolorbox}
\usepackage{setspace}
\usepackage{array}
\usepackage{wrapfig}
\usepackage{amssymb}
\usepackage{multirow} 
\usepackage{mathtools} 

\setlength{\columnsep}{20pt}
\newcolumntype{P}[1]{>{\centering\arraybackslash}p{#1}}

\newcommand{\redformat}[2]{{{}{}}$#1$ ({\color{red}$ \small #2$})}
\newcommand{\greenformat}[2]{{{}{}}$#1$ ({\color{ForestGreen}$ \small #2$})}
\newcommand{\grayformat}[2]{{{}{}}$#1$ ({\color{gray}$ \small #2$})}

\newcommand{\blue}[1]{\begin{color}{blue}#1\end{color}}


%

%
\ifCLASSOPTIONcompsoc
  \usepackage[nocompress]{cite}
\else
  \usepackage{cite}
\fi
%

%
\ifCLASSINFOpdf
\else
\fi
%
%

%
\usepackage{amsmath}
\hyphenation{op-tical net-works semi-conduc-tor}

\begin{document}
%
\title{State Diversity Matters in \\ Offline Behavior Distillation}
%
%
%
%
\author{Shiye~Lei,
Zhihao Cheng,
        and~Dacheng~Tao,~\IEEEmembership{Fellow,~IEEE}

\thanks{S. Lei and Z. Cheng are with the School of Computer Science, Faculty of Engineering, The University of Sydney, Darlington, NSW 2008, Australia. D. Tao is with the Generative AI Lab, College of Computing and Data Science, Nanyang Technological University, 639798, Singapore. (e-mail: shiye.lei@sydney.edu.au, zhihaocheng111@gmail.com, dacheng.tao@gmail.com
).
}
}

%
%

\markboth{ }%
{Lei \MakeLowercase{\textit{et al.}}: Bare Advanced Demo of IEEEtran.cls for IEEE Computer Society Journals}
%



\IEEEtitleabstractindextext{%
\begin{abstract}
Offline Behavior Distillation (OBD), which condenses massive offline RL data into a compact synthetic behavioral dataset, offers a promising approach for efficient policy training and can be applied across various downstream RL tasks. In this paper, we uncover a misalignment between original and distilled datasets, observing that a high-quality original dataset does \textbf{not} necessarily yield a superior synthetic dataset. Through an empirical analysis of policy performance under varying levels of training loss, we show that datasets with greater state diversity outperforms those with higher state quality when training loss is substantial, as is often the case in OBD, whereas the relationship reverses under minimal loss, which contributes to the misalignment. By associating state quality and diversity in reducing pivotal and surrounding error, respectively, our theoretical analysis establishes that surrounding error plays a more crucial role in policy performance when pivotal error is large, thereby highlighting the importance of state diversity in OBD scenario. Furthermore, we propose a novel yet simple algorithm, {\it {\textbf{s}tate \textbf{d}ensity \textbf{w}eighted}} (\textbf{SDW}) OBD, which emphasizes state diversity by weighting the distillation objective using the reciprocal of state density, thereby distilling a more diverse state information into synthetic data. Extensive experiments across multiple D4RL datasets confirm that SDW significantly enhances OBD performance when the original dataset exhibits limited state diversity.
\end{abstract}

\begin{IEEEkeywords}
Data-centric AI, Data Compression, Efficient Machine Learning, Dataset Distillation
\end{IEEEkeywords}}

\maketitle

\IEEEdisplaynontitleabstractindextext

%
\IEEEpeerreviewmaketitle

\section{Introduction}
\label{sec:intro}

\IEEEPARstart{O}{ffline} reinforcement learning (RL), which learns a policy from a pre-collected dataset and avoids interacting with environments, has received many attention in vision control and language processing due to the safety and convenience \citep{levine2020offline,10610132,snell2023offline}. On the other hand, the static datasets associated with offline RL are typically massive and collected by sub-optimal policies \citep{fu2020d4rl}, impairing the training efficiency. To alleviate this issue, \citet{lei2024offline} propose offline behavior distillation (OBD) that condenses the massive offline RL dataset into a tiny behavioral dataset, and a satisfactory policy can be trained {\it in one second} on this high-informative distilled dataset. Therefore, OBD offers many benefits in (1) training cost reduce and green AI; (2) downstream RL tasks by using distilled data as prior knowledge ({\it e.g.} continual RL \citep{gai2023offline}, multi-task RL \citep{yu2021conservative}, efficient policy pretraining \citep{goecks2019integrating}, offline-to-online RL finetuning \citep{zhao2022adaptive}); and (3) data privacy \citep{qiao2023offline}.



In this paper, we explore how dataset characteristics impact OBD performance, beginning with an investigation into the alignment between original datasets and distilled datasets. 
With two types of offline RL datasets \texttt{M-E} and \texttt{M-R}, where \texttt{M-E} contains more high-quality states, while \texttt{M-R} exhibits greater state diversity, {\it i.e.}, have more diverse states. Although \texttt{M-E} can train better policies than \texttt{M-R}, 
distilled datasets derived from \texttt{M-R} outperform those derived from \texttt{M-E}. This performance misalignment demonstrates that {\it a high-quality original dataset does \textbf{not} necessarily yield a superior distilled dataset}. 
Because policy training on the distilled dataset can be regarded as training on the original dataset with a substantial training loss, we analyze policy performance under varying levels of training loss. Our empirical results remarkably demonstrate that (1) when the training loss is minimal, policies trained on \texttt{M-E} datasets outperform those trained on \texttt{M-R} datasets; (2) however, as training loss increases, policies trained on \texttt{M-R} datasets exhibit better performance than those trained on \texttt{M-E} datasets. These observations suggest that {\it the relative quality between \texttt{M-E} and \texttt{M-R} datasets may vary depending on the magnitude of the training loss}, and {\it state diversity plays a more crucial role when training loss cannot be effectively minimized}, as is often the case in OBD.

Beyond empirical analysis, we further investigate the effect of state diversity from a theoretical perspective. We partition the state space into high-quality pivotal states and diverse surrounding states, depending on whether they are visited by an expert policy, and define the pivotal and surrounding error accordingly. Generally, datasets with higher-quality states are more effective in reducing pivotal error, whereas those with greater state diversity primarily decrease surrounding error. Previous theorem suggests that policy performance depends exclusively on pivotal error, overlooking the contribution of surrounding error (Theorem \ref{thm:expert}). In contrast, we prove that policy performance is influenced by both pivotal and surrounding errors (Theorem \ref{thm:3}), with {\it surrounding error having a greater impact when pivotal error is large}, as often in OBD. This underscores the importance of state diversity in original dataset for reducing surrounding error of policies trained on the corresponding distilled dataset.

Motivated by the significance of state diversity, we propose a novel yet simple algorithm, {\it \textbf{s}tate \textbf{d}ensity \textbf{w}eighted} (\textbf{SDW}) OBD, which emphasizes state diversity of original data during the distillation process. Concretely, SDW weights the OBD loss for each state $s$ by the reciprocal of the state’s density, $\frac{1}{{d}(s)}$. This approach ensures that rare states in the original dataset are assigned larger weights, {\it i.e.}, received greater attention. As a result, more diverse state information can be distilled into the synthetic behavioral dataset, thereby reducing surrounding error of policies trained on it.  
{Extensive experiments on multiple D4RL datasets confirm that SDW significantly enhances OBD performance, particularly when original datasets exhibit limited state diversity.}

Our contributions can be summarized as:
\begin{itemize}[leftmargin=7mm]
    \item We reveal a misalignment between original and distilled datasets in OBD, as datasets with greater state diversity show better resilience to large training loss.
    


    \item We theoretically establish the policy performance guarantee {\it w.r.t.} surrounding error, which underscores the significance of state diversity for OBD.

    \item We propose the novel SDW, which prioritizes state diversity and achieves a superior OBD performance.
\end{itemize}




\section{Related Works}

\noindent \textbf{Offline RL \ \ } For many real-world applications such as autonomous driving or healthcare, collecting data through online interaction can be dangerous or prohibitively expensive. Offline reinforcement learning (offline RL) addresses this limitation by learning policies entirely from pre-collected datasets generated by suboptimal behavior policies \citep{lange2012batch, fu2020d4rl}. Compared to conventional online RL, offline RL must explicitly address the distributional shift between the learned policy and the behavior policy. A large family of methods tackles this challenge by constraining policy updates to remain close to the behavior distribution, including behavior-regularized and policy-constrained approaches \citep{fujimoto2021a, tarasov2024revisiting, kumar2019stabilizing, wu2019behavior, fujimoto2019off, kostrikov2022offline}. Another major line of work reduces out-of-distribution errors through conservative or uncertainty-aware value estimation, which penalizes unreliable value predictions for actions not well supported by the dataset \citep{kumar2020conservative, yu2021combo, bai2022pessimistic, rigter2022rambo, nakamoto2023calql, uehara2022pessimistic}. Model-based offline RL further incorporates uncertainty-aware dynamics modeling to ensure reliable extrapolation while maintaining pessimism over model predictions \citep{kidambi2020morel, yu2020mopo}. 
However, while offline RL achieves significant progress, the offline dataset is extremely large 
and contains sensitive information ({\it e.g.} medical history) \citep{qiao2023offline}, which require researchers to consider the training efficiency, data storage, and data privacy in offline RL. 



\vspace{3mm}

\noindent \textbf{Dataset Distillation (DD)} 
aims to distill a large real dataset into a much smaller synthetic one such that models trained on the distilled data generalize comparably to those trained on the original dataset \citep{sachdeva2023data, yu2024dataset, lei2024comprehensive}. By constructing tiny but informative synthetic datasets, DD alleviates issues related to computational inefficiency, storage cost, and data privacy. Two major frameworks dominate the literature. One line of work formulates dataset distillation as a bi-level optimization problem within a meta-learning paradigm, where the synthetic dataset is optimized so that a model trained on it performs well on the real data \citep{wang2018dataset, deng2022remember}. The second line, often referred to as matching-based distillation, aligns synthetic and real data in terms of gradient similarity \citep{zhao2021dataset, zhao2021dsa}, feature statistics \citep{zhao2023distribution, wang2022cafe}, or training trajectories \citep{cazenavette2022dataset, cui2023scaling}. Beyond image data, dataset distillation has been extended to a wide range of modalities, including text \citep{maekawa2023dataset,tao2024textual}, video \citep{wang2024dancing,ding2025condensing}, graph \citep{jin2022graph,jin2022condensing,xu2023kernel,zheng2023structurefree}, time series \citep{liu2024dataset,ding2024condtsf,miao2024less}, speech \citep{ritter2024dataset}, and vision–language data \citep{wu2024visionlanguage}. Building on the efficiency benefits of distilled data, \citet{lei2024offline} propose offline behavior distillation for offline RL, where a compact behavioral dataset is synthesized from large quantities of suboptimal trajectories to enable more efficient policy learning. Unlike supervised learning tasks that rely on standardized benchmark datasets, offline RL datasets vary widely in sources, coverage, and behavior characteristics. The impact of such dataset characteristics on distillation remains insufficiently understood, and our work takes an initial step toward addressing this gap.



\section{Preliminaries}

\noindent \textbf{Reinforcement Learning} is typically formulated as an episode Markov decision process (MDP) $\langle\mathcal{S}, \mathcal{A}, \mathcal{T}, r, T, d^1\rangle$, where 
state ${s} \in \mathcal{S}$, action ${a} \in \mathcal{A}$, $\mathcal{T}({s}^\prime | {s}, {a})$ is the transition function, $r(s,a)$ is the reward function bounded by $|r ({s}, {a})| \leq R_{\max}$, $T$ is the horizon length, and $d^1(s)$ is the initial state distribution \citep{sutton2018reinforcement}. The goal of RL is to learn a policy $\pi$ that maximizes long-term expected return $J(\pi) = \mathbb{E}_{\pi}\left[\sum_{t=1}^{T} r_t\right]$, where $r_t = r({s}_t, {a}_t)$. For clarity, we primarily use the deterministic policies $\pi(s): \mathcal{S} \rightarrow \mathcal{A}$, while also considering stochastic policies $\pi(a|s):\mathcal{S}\times \mathcal{A}\rightarrow [0,1]$ in theoretical analysis. We define $d_{\pi}^t(s)= \Pr(s_t=s; \pi)$ and $\rho_\pi^t(s, a) = \Pr(s_t=s, a_t=a; \pi)$ as $t$-th step state distribution and state-action distribution, respectively. Then, the state distribution and state-action distribution of $\pi$ are denoted as $d_{\pi}(s) = \frac{1}{T}\sum_{t=1}^T d_{\pi}^t(s)$ and $\rho_\pi(s, a) =  \frac{1}{T}\sum_{t=1}^T  \rho_\pi^t(s, a) $. The action-value function of $\pi$ is $q_\pi(s,a) = \mathbb{E}_\pi\left[\sum_{t=1}^{T} r_{t} \mid s_1=s, a_1=a\right]$, which is the expected return starting from $s$, taking the action $a$. Without accessing environments, offline RL learns policies from a large offline dataset $\mathcal{D}_\texttt{off}=\{({s}_i,{a}_i,{s}'_i,r_i)\}_{i=1}^{N_\texttt{off}}$ with specially designed Bellman backup. Although $\mathcal{D}_\texttt{off}$ is typically collected by the suboptimal behavior policy, offline RL algorithms can recapitulate a near-optimal policy $\pi^*$ and value function $q_{\pi^*}$ from $\mathcal{D}_\texttt{off}$.

\vspace{3mm}

\noindent \textbf{Behavioral Cloning} \citep{pomerleau1991efficient} {can be viewed as a special case of offline RL that focuses exclusively on high-quality data. Given a set of expert demonstrations $\mathcal{D}_\texttt{BC} = \{({s}_i, {a}_i)\}_{i=1}^{N_\texttt{BC}}$,}
the policy network $\pi_\theta$ parameterized by $\theta$ is trained in a supervised manner to mimic the behavior observed in $\mathcal{D}_\texttt{BC}$. Specifically, BC minimizes the following objective: $\min_{\theta} \ell_\texttt{BC}(\theta, \mathcal{D}_\texttt{BC}) \coloneq \mathbb{E}_{({s},{a})\sim \mathcal{D}_\texttt{BC}}\left[ {\left(\pi_\theta \left({a}|{s}\right) -  \hat{\pi}^*(a|s)\right)^2}\right]$, 
where $\hat{\pi}^*(a|s)$ is an empirical estimation of the expert policy, computed as $\hat{\pi}^*(a|s) = \frac{\sum_{i=1}^{N_\texttt{BC}}\mathbb{I}(s_i=s, a_i=a)}{\sum_{i=1}^{N_\texttt{BC}}\mathbb{I}(s_i=s)}$. 

\begin{figure}[t]
\centering
\begin{subfigure}[b]{0.15\textwidth}
    \centering
    \includegraphics[width=\columnwidth]{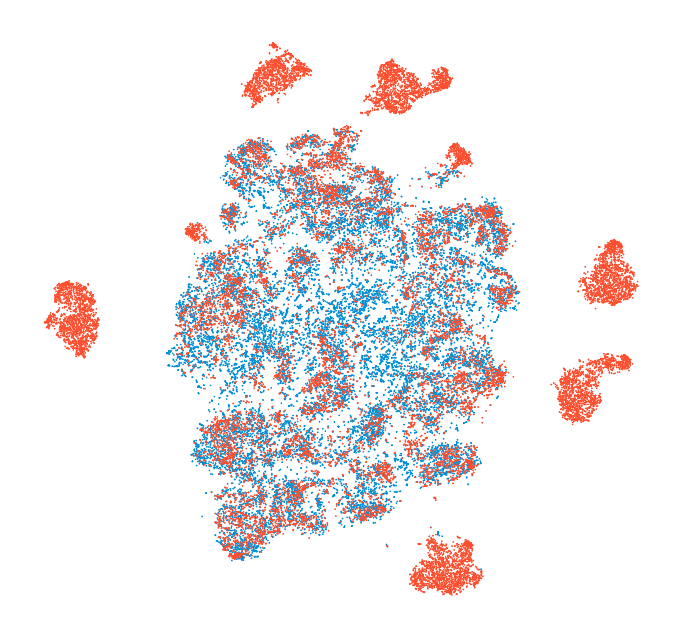}
    \caption{HalfCheetah}
    \label{figure:halfcheetah tsne}
\end{subfigure}%
\hfill
\begin{subfigure}[b]{0.15\textwidth}
    \centering
    \includegraphics[width=\columnwidth]{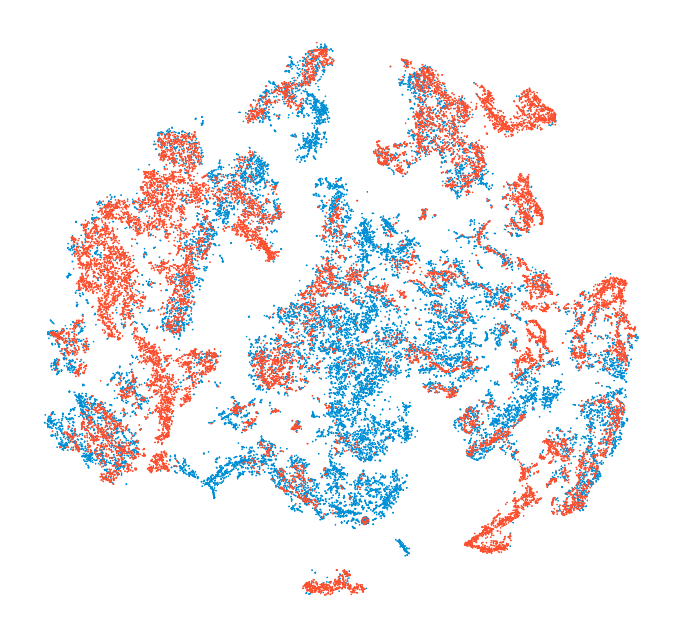}
    \caption{Hopper}
    \label{figure:hopper tsne}
\end{subfigure}%
\hfill
\begin{subfigure}[b]{0.15\textwidth}
    \centering
    \includegraphics[width=\columnwidth]{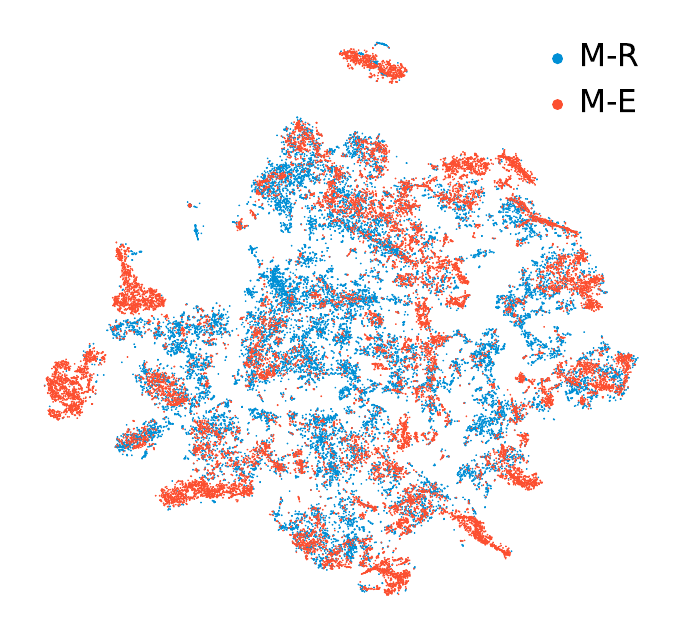}
    \caption{Walker2d}
    \label{figure:walker2d tsne}
\end{subfigure}
\caption{t-SNE visualization of \texttt{M-R} (blue) and \texttt{M-E} (red) states.}
\label{figure:tsne}
\end{figure}

\subsection{Offline Behavior Distillation}
OBD synthesizes a tiny behavioral dataset $\mathcal{D}_\texttt{syn}=\{(s_i, a_i)\}_{i=1}^{N_\texttt{syn}}$ from $\mathcal{D}_\texttt{off}$, where $N_\texttt{syn} \ll N_\texttt{off}$, and policies trained on $\mathcal{D}_\texttt{syn}$ via BC have a high return when interacting with the environment. Because $\mathcal{D}_\texttt{off}$ is collected by suboptimal policies, an effective policy $\pi^\ast$ is learned from $\mathcal{D}_\texttt{off}$ via offline RL algorithms in advance and then relabel actions for $s\in\mathcal{D}_\texttt{off}$, resulting the corrected original dataset $\mathcal{D}_{\texttt{off}|\pi^\ast}=\{s_i, \pi^\ast(s_i)\}_{i=1}^{N_\texttt{off}}$ to guide OBD process, {\it i.e.}, measure the performance of policy trained on $\mathcal{D}_\texttt{syn}$. Let $\mathcal{D}_\texttt{real} = \mathcal{D}_{\texttt{off}|\pi^\ast}$ denote the original/real dataset prior to distillation, then OBD can be formulated as a bi-level optimization problem:
\begin{align}
\label{eq:obd}
     & \mathcal{D}_\texttt{syn}^\ast = \arg\min_{\mathcal{D}} \mathcal{H} \left(\pi_{\theta\left(\mathcal{D}\right)}, \mathcal{D}_\texttt{real}\right) \nonumber \\
& \text{s.t.} \quad   \theta(\mathcal{D}) = \arg\min_\theta \ell_\texttt{BC}(\theta, \mathcal{D}), \nonumber
\end{align}
where $\mathcal{H}(\cdot)$ denotes the OBD objective. 
A naive OBD loss is policy-based cloning (PBC), which measures the difference between $\pi$ and $\pi^\ast$: 
\begin{equation}
    \mathcal{H}_{\texttt{PBC}}=\mathbb{E}_{(s,a)\sim\mathcal{D}_\texttt{real}}\left[\left(\pi(s) - a\right)^2 \right] \nonumber
\end{equation}
\citet{lei2024offline} improve this by proposing action-value weighted PBC (Av-PBC): 
\begin{equation}
    \mathcal{H}_{\texttt{Av-PBC}}=\mathbb{E}_{(s,a)\sim\mathcal{D}_\texttt{real}}\left[q_{\pi^\ast}(s,a)\left(\pi(s) - a\right)^2 \right] \nonumber
\end{equation}
The distillation objective $\mathcal{H}$ can be considered as the (weighted) loss of policy $\pi$ on the original dataset $\mathcal{D}_\texttt{real}$. Then each $s_i$ and $a_i$ in $\mathcal{D}_\texttt{syn}$ are continually updated by decreasing $\mathcal{H}$ via b ackpropagation through time (BPTT) \citep{werbos1990backpropagation}. 

\section{Dataset Quality Alignment?}
\label{sec:alignment}

In the case of real datasets distilled by conventional DD in supervised learning domain, data are typically i.i.d. drawn and labeled without noise. Consequently, researchers focus on distilling the standard dataset for each task, hardly exploring how dataset characteristics influence distillation performance. However, in offline RL, there is no standard dataset for each task (environment); instead, datasets are collected using different strategies involving various policies or humans interacting with the environment, leading to variations in dataset quality. This raises a critical question: is there alignment between the quality\footnote{Unless otherwise specified, the quality of dataset refers to the performance of policies trained on it.} of original and distilled data in OBD? Specifically, \textit{can a higher-quality synthetic dataset, $\mathcal{D}_\texttt{syn}$, be distilled from a superior original dataset $\mathcal{D}_\texttt{real}$?} To address this, we begin by presenting the nature of offline RL datasets to provide a clearer context.



\noindent \textbf{Datasets \ \ } 
We utilize offline RL data provided by D4RL \citep{fu2020d4rl}, a widely recognized benchmark for offline RL. Three environments of \texttt{Halfcheetah}, \texttt{Hooper}, \texttt{Walker2D} are utilized for data collection. For each environment, three offline datasets with different qualities are offered by D4RL, {\it i.e.}, \texttt{medium-replay} (\texttt{M-R}), \texttt{medium} (\texttt{M}), and \texttt{medium-expert} (\texttt{M-E}) datasets. Concretely, \texttt{M-R} datasets are generated by a range of policies, from random to medium-level; \texttt{M} datasets are produced by medium-level policies; and \texttt{M-E} datasets are collected by policies ranging from medium to expert level. 
Suboptimal policies explore a border range of states, while expert policies exhibit more consistent behavior. As a result, {\it the \texttt{M-R} datasets offer better state diversity, whereas the \texttt{M-E} datasets are characterized by higher state quality}, as they allow for higher cumulative rewards \citep{monier2020offline,schweighofer2021understanding}. As illustrated in Figure \ref{figure:tsne}, we visualize state distribution of \texttt{M-R} and \texttt{M-E} data via t-SNE \citep{van2008visualizing}. The \texttt{M-R} states evenly cover the central area, while the \texttt{M-E} states from concentrated clusters. 


\begin{table}[t]
    \centering
    \caption{Performance of policies trained on $\mathcal{D}_\texttt{real}$ and $\mathcal{D}_\texttt{syn}$. The better dataset of \texttt{M-R} 
    and \texttt{M-E} 
    is marked with \textbf{bold} scores for each environment.}
    \resizebox{\linewidth}{!}{
    \begin{tabular}{c  cc  cc  cc }
    \toprule
     \multirow{2}{*}{Datasets}  & \multicolumn{2}{c}{Halfcheetah} & \multicolumn{2}{c}{Hopper} & \multicolumn{2}{c}{Walker2D}  \\
     & \texttt{M-R} & \texttt{M-E} & \texttt{M-R} & \texttt{M-E} & \texttt{M-R} & \texttt{M-E} \\
     \specialrule{0.1pt}{0.2\jot}{0.2pc}
      $\mathcal{D}_\texttt{real}$ & $45.1$ & $\bm{53.1}$ & $93.0$ & $\bm{99.0}$ & $84.6$ & $\bm{107.2}$ \\ 
        $\mathcal{D}_\texttt{syn}$ & $\bm{35.9}$ & ${22.0}$ & $\bm{40.9}$ & $38.7$ & $\bm{55.0}$ & ${42.1}$ \\ 
    \bottomrule
    \end{tabular}
    }
    \label{tab:misalign}
\end{table}

\begin{figure*}[t]
\centering
\begin{subfigure}[b]{0.3\textwidth}
    \centering
    \includegraphics[width=\columnwidth]{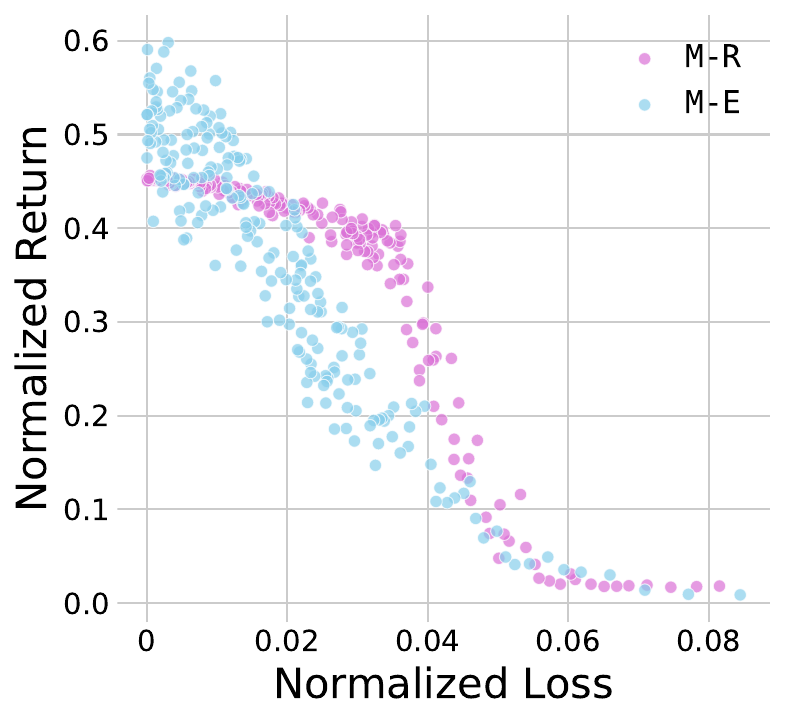}
    \caption{Halfcheetah}
    \label{figure:halfcheetah r}
\end{subfigure}%
\hfill
\begin{subfigure}[b]{0.3\textwidth}
    \centering
    \includegraphics[width=\columnwidth]{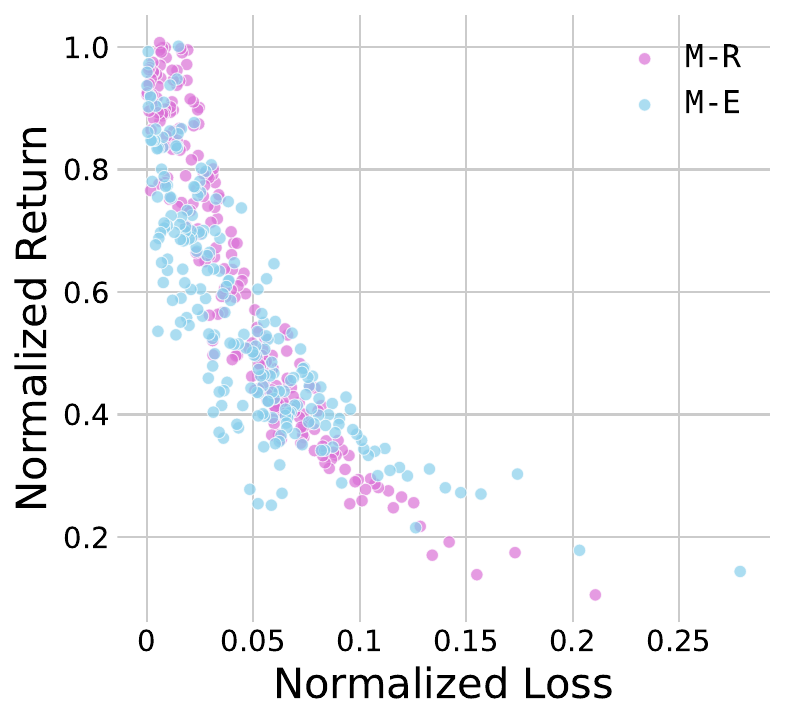}
    \caption{Hopper}
    \label{figure:hopper r}
\end{subfigure}%
\hfill
\begin{subfigure}[b]{0.3\textwidth}
    \centering
    \includegraphics[width=\columnwidth]{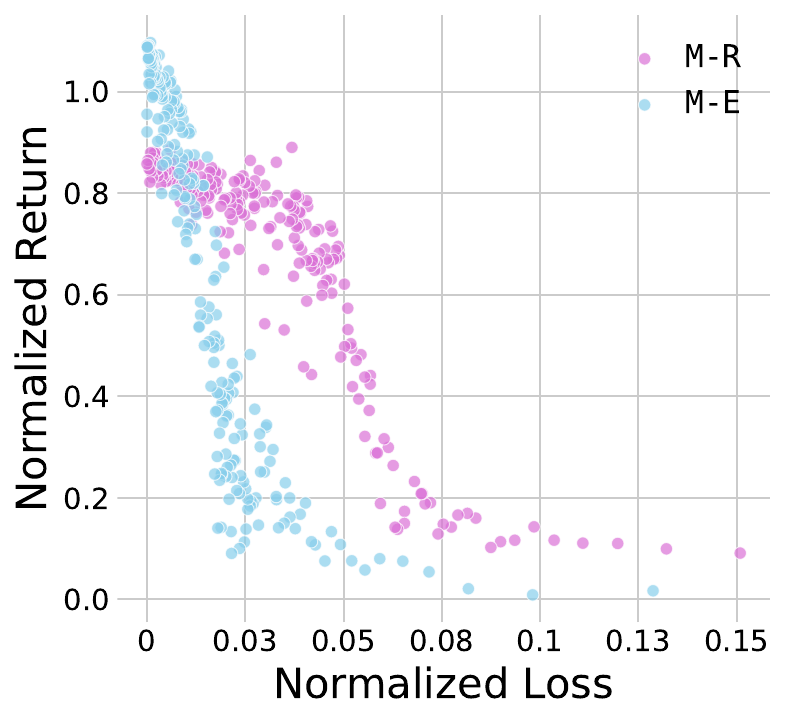}
    \caption{Walker2D}
    \label{figure:walker2d r}
\end{subfigure}
\caption{Plots of policy performance (normalized return) as functions of the loss {\it w.r.t.} real data. Each point is calculated and then averaged over five trials.}
\label{figure:error_vs_return}
\end{figure*}

\vspace{1mm}
\noindent \textbf{Experiments \ \ } 
We train policies on the real \texttt{M-R} and \texttt{M-E} datasets $\mathcal{D}_\texttt{real}$ and their corresponding distilled datasets $\mathcal{D}_\texttt{syn}$\footnote{The distilled behavioral data are synthesized using Av-PBC.
}, respectively. The policy performance results, shown in Table \ref{tab:misalign}, reveal several observations: (1) in the \texttt{Hopper} environment, policies trained on (distilled) \texttt{M-R} and (distilled) \texttt{M-E} datasets exhibit comparable performance; and (2) in the \texttt{Halfcheetah} and \texttt{Walker2D} environments, polices trained on \texttt{M-E} data outperform those trained on \texttt{M-R} data. However, distilled datasets derived from \texttt{M-R} datasets outperform those derived from \texttt{M-E} datasets. These findings suggest that \textit{a high-quality original dataset does \textbf{not} necessarily yield a superior synthetic dataset}. We next investigate this misalignment between the quality of original and distilled datasets in greater detail.

{

\subsection{Empirical Analysis}

When training policies directly on $\mathcal{D}_\texttt{real}$ with the supervised BC, the objective loss $\mathcal{H}$ is readily decreased to near zero, leading to policies that perfectly fit $\mathcal{D}_\texttt{real}$. In contrast, $\mathcal{D}_\texttt{syn}$ is obtained by minimizing $\mathcal{H}$ in a bi-level optimization process, where $\mathcal{H}$ is difficult to minimize to a significant degree. To this end, {\it policies trained on $\mathcal{D}_\texttt{syn}$ can be viewed as training on $\mathcal{D}_\texttt{real}$ with a larger loss $\mathcal{H}$}.

Building on the distinction between training policies on real versus synthetic data, we investigate policy performance under varying training losses $\mathcal{H}$. Specifically, we train policies directly on the real dataset $\mathcal{D}_\texttt{real}$ and monitor both the training loss and the corresponding policy return throughout the training process. The results are presented in Figure \ref{figure:error_vs_return}. In Figure \ref{figure:hopper r} for \texttt{Hopper}, the trend of policy performance relative to the loss on \texttt{M-E} and \texttt{M-R} data is quite similar, which explains their comparable performance in Table \ref{tab:misalign}. From Figures \ref{figure:halfcheetah r} and \ref{figure:walker2d r} on \texttt{Halfcheetah} and \texttt{Walker2D}, respectively, we observe two key trends: (1) under low training loss, policies trained on \texttt{M-E} data outperform those trained on \texttt{M-R} data; (2) as training loss increases, policies trained on \texttt{M-R} data achieve better performance. These observations suggest that {\it while state quality (\texttt{M-E}) is essential for policy performance under marginal loss, state diversity (\texttt{M-R}) plays a critical role in mitigating performance degradation as the loss increases.}


In most learning tasks, objectives can be directly minimized in a single-level manner, and dataset quality is evaluated under the {\it interpolation phase}, where training loss approaches zero. However, the above empirical study reveals that dataset quality in {\it underfitting phase} is not always correlated with that in {\it interpolation phase} in offline RL, and {\it state diversity becomes more crucial to dataset quality in the underfitting phase}. In offline behavior distillation, which employs bi-level optimization, policies trained on synthetic data inevitably incur significant loss {\it w.r.t.} original data, situating the assessment of dataset quality in the {underfitting phase}. The discrepancy between dataset quality in interpolation and underfitting phases helps explain why \texttt{M-R} datasets, despite inferior to \texttt{M-E} datasets in interpolation phase, can yield superior synthetic datasets in OBD.








}

\section{Theoretical Analysis}
In this section, we theoretically analyze why \texttt{M-E} and \texttt{M-R} datasets show inverse relative qualities under interpolation and underfitting phases. Considering that \texttt{M-E} datasets have better state quality, while \texttt{M-R} datasets exhibit greater state diversity as illustrated in Section \ref{sec:alignment}, we partition the state space into high-quality pivotal states and diverse surrounding states, depending on whether they are visited by an expert policy, and define the pivotal and surrounding error accordingly. Generally, datasets with higher-quality states are more effective in reducing the pivotal error, whereas datasets with better state diversity primarily decrease surrounding error. Previous theorem suggests that policy performance depends solely on pivotal error, overlooking the contribution of surrounding error. In contrast, we prove that policy performance is influenced by both pivotal and surrounding errors, with surrounding error exerting a greater impact in the underfitting phase, as often in OBD. This underscores the importance of state diversity in original data for reducing surrounding error of policies trained on the corresponding distilled dataset.


\begin{definition}[Pivotal and surrounding state]
Given the expert policy $\pi^\ast$, the pivotal state set $\mathcal{S}_e(\pi^\ast)=\{s|d_{\pi^\ast}(s)>0\}$, and the corresponding surrounding state set $\mathcal{S}_\mu(\pi^\ast)=\{s|d_{\pi^\ast}(s)=0\}$.
\end{definition}
The state space $\mathcal{S}$ is partitioned into pivotal states $\mathcal{S}_e$ and surrounding states $\mathcal{S}_\mu$,  according to whether they are visited by $\pi^\ast$, thus pivotal states typically exhibit higher quality compared to surrounding states. Given a policy $\hat{\pi}$
, we define the errors of $\hat{\pi}$ on $\mathcal{S}_e$ and $\mathcal{S}_\mu$ as follows.
\begin{definition}
The pivotal error of $\hat{\pi}$ is defined as $\mathcal{E}_{e \vert \pi^\ast}(\hat{\pi}) = \mathbb{E}_{s\sim d_{\pi^\ast}}\left[\sum_{a\in\mathcal{A}} \left\vert \hat{\pi}(a|s) -  \pi^\ast(a|s) \right\vert \right]$.
\end{definition}
Pivotal error quantifies the action difference between $\hat{\pi}$ and $\pi^\ast$ on pivotal states. Since $\pi^\ast$ only visits pivotal states, the action difference cannot be used to measure the error of $\hat{\pi}$ on surrounding states. When $\hat{\pi}$ visits a surrounding state at step $t$, we consider the action on this surrounding state is correct if the action helps transition to a pivotal state at the next step. Then surrounding error can be correspondingly defined as follows.
\begin{definition}
The surrounding error of $\hat{\pi}$, denoted as $\mathcal{E}_{\mu \vert \pi^\ast}(\hat{\pi})$, is the average probability of transitioning to a surrounding state at the next step, given that $\hat{\pi}$ is currently in a surrounding state.
\end{definition}
The previous theorems exclusively shows the significance of pivotal error in policy performance, as outlined below. 
\begin{theorem}[Theorem 2.1 in \citep{ross2010efficient}]
\label{thm:expert}
For two policies $\pi^\ast$ and $\hat{\pi}$, if $\mathcal{E}_{e \vert \pi^\ast}(\hat{\pi}) \leq \epsilon$, we have 
$ |J(\pi^\ast) - J(\hat{\pi})|  \leq  \epsilon T^2 R_{\max}$.
\end{theorem}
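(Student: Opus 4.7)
The plan is to apply a coupling argument between trajectories generated by $\pi^\ast$ and $\hat{\pi}$, translating the pivotal error bound into a bound on the probability that the two rollouts diverge, and then into a bound on the return difference.

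First, I would express the return difference via per-step state-action occupancy: since $J(\pi) = \sum_{t=1}^T \mathbb{E}_{(s,a)\sim \rho_\pi^t}[r(s,a)]$ and $|r| \leq R_{\max}$, we have $|J(\pi^\ast) - J(\hat{\pi})| \leq R_{\max} \sum_{t=1}^T \|\rho_{\pi^\ast}^t - \rho_{\hat{\pi}}^t\|_1$, so it suffices to bound the total per-step variation gap in terms of $\epsilon$.

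Next, I would construct the coupling inductively in $t$: at each step where the two trajectories occupy the same state $s$, draw actions from the maximal coupling of $\pi^\ast(\cdot|s)$ and $\hat{\pi}(\cdot|s)$, so the actions agree with probability $1 - \tfrac{1}{2}\sum_a |\hat{\pi}(a|s) - \pi^\ast(a|s)|$; the shared transition dynamics then preserve the state match. Let $E_t$ denote the event that the trajectories have agreed through step $t$. Conditioned on $E_{t-1}$, the current state is distributed as $d_{\pi^\ast}^t$, so a union bound yields
\[
\Pr[\neg E_T] \leq \sum_{t=1}^T \mathbb{E}_{s\sim d_{\pi^\ast}^t}\Bigl[\tfrac{1}{2}\textstyle\sum_a |\hat{\pi}(a|s) - \pi^\ast(a|s)|\Bigr] = \tfrac{T}{2}\,\mathcal{E}_{e\vert\pi^\ast}(\hat{\pi}) \leq \tfrac{T\epsilon}{2},
\]
where the equality uses the definition $d_{\pi^\ast} = \tfrac{1}{T}\sum_t d_{\pi^\ast}^t$. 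Finally, on $E_T$ the two rollouts coincide and contribute zero to the return difference; on $\neg E_T$, each return lies in $[-T R_{\max},\, T R_{\max}]$, so their difference is bounded by $2 T R_{\max}$. Combining yields $|J(\pi^\ast) - J(\hat{\pi})| \leq 2 T R_{\max}\cdot \tfrac{T\epsilon}{2} = \epsilon\, T^2 R_{\max}$.

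The main obstacle is tracking the constants carefully: the pivotal error $\mathcal{E}_{e\vert\pi^\ast}$ is defined as the $\ell_1$ difference $\sum_a |\hat{\pi}(a|s)-\pi^\ast(a|s)|$, which equals twice the total variation distance, so the coupling's per-step disagreement probability is exactly half of the integrand in $\mathcal{E}_{e\vert\pi^\ast}$; this halving precisely cancels the factor of $2$ arising from the crude bound $2TR_{\max}$ on the disagreement event, producing the stated constant. A subtle point is that the induction only exposes $\hat{\pi}$ to the distribution $d_{\pi^\ast}^t$ while the coupled trajectories still agree, so the pivotal error (which is measured against $d_{\pi^\ast}$) suffices and no assumption on $\hat{\pi}$'s behavior at surrounding states is needed — precisely the asymmetry that motivates the surrounding-error analysis later in the paper.
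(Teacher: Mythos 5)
Your coupling argument is correct and arrives at the right constant, but it is a genuinely different route from the paper's proof. The paper uses an analytic performance-difference decomposition: at each step $t$ the gap is split into $\bm{A}(t)$ (swapping $\pi^\ast$ for $\hat{\pi}$ at the fixed state distribution $d_{\pi^\ast}^t$, bounded by $\epsilon_t R_{\max}$) and $\bm{B}(t)$ (the state-distribution mismatch, controlled through the recursion $C(t)\le C(t-1)+\epsilon_{t-1}$ on $\sum_s |d_{\pi^\ast}^t(s)-d_{\hat{\pi}}^t(s)|$), then sums $\sum_t\sum_{i\le t}\epsilon_i R_{\max}\le \epsilon T^2 R_{\max}$. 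You instead build a maximal coupling of the two rollouts, bound the probability of ever diverging by $\tfrac{1}{2}\sum_t\epsilon_t=\tfrac{T\epsilon}{2}$, and pay $2TR_{\max}$ on the divergence event; the factor-of-two bookkeeping between the $\ell_1$ policy distance and total variation works out exactly as you say. Your version makes the source of the $T^2$ more transparent ($T$ chances to diverge times $TR_{\max}$ lost per divergence) and, as you note, cleanly isolates why only the distribution $d_{\pi^\ast}$ ever enters. What the paper's route buys is the explicit recursion on $C(t)$, which is precisely the object that gets refined (via the surrounding error $\epsilon_\mu$ and Assumption 1) in the proof of Theorem 2, so the two proofs are not interchangeable downstream. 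One small imprecision: conditioned on the agreement event $E_{t-1}$, the state at step $t$ is \emph{not} distributed exactly as $d_{\pi^\ast}^t$ (conditioning on non-divergence reweights the trajectory distribution); the displayed inequality nevertheless holds because $\Pr[\text{first disagreement at }t]=\mathbb{E}\bigl[\mathbb{I}(E_{t-1})\cdot\mathrm{TV}(\pi^\ast(\cdot|s_t),\hat{\pi}(\cdot|s_t))\bigr]\le\mathbb{E}_{s\sim d_{\pi^\ast}^t}\bigl[\mathrm{TV}(\pi^\ast(\cdot|s),\hat{\pi}(\cdot|s))\bigr]$, where the outer expectation is over the unconditioned $\pi^\ast$ marginal of the coupling — you should state the bound this way rather than as a claim about the conditional law.
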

We provide a novel proof for this theorem. In contrast to the original, our proof (1) is entirely based on formal operations rather than abstract notions and narratives, and (2) applies to stochastic $\pi^\ast$, whereas the original proof assumes a deterministic expert policy. A proof sketch is presented below, with the full proof detailed in Appendix \ref{app:proof thm1}.

\vspace{1mm}
\noindent \textbf{Proof Sketch\ \ }The left hand side can be expanded into
\begin{align}
     \vert & J(\pi^\ast)  - J(\hat{\pi}) \vert \nonumber \\
     &     
      \leq \sum_{t=1}^T \left\vert \mathbb{E}_{s\sim d_{\pi^\ast}^t, a\sim \pi^\ast}[r(s,a)] - \mathbb{E}_{s\sim d_{\hat{\pi}}^t, a\sim \hat{\pi}}[r(s,a)] \right\vert 
\end{align}
Then for a specific step $t$,
\begin{align}
     &\left\vert \mathbb{E}_{s\sim d_{\pi^\ast}^t, a\sim \pi^\ast}[r(s,a)] - \mathbb{E}_{s\sim d_{\hat{\pi}}^t, a\sim \hat{\pi}}[r(s,a)] \right\vert \nonumber \\
     &\quad \quad \leq   \underbrace{\left\vert \mathbb{E}_{s\sim d_{\pi^\ast}^t, \blue{a\sim \pi^\ast}}[r(s,a)] {- \mathbb{E}_{s\sim d_{\pi^\ast}^t, \blue{a\sim \hat{\pi}}}[r(s,a)]}\right\vert}_{\bm{A}(t)}  \nonumber \\
    & \quad \qquad + \underbrace{\left\vert { \mathbb{E}_{\blue{s\sim d_{\pi^\ast}^t}, a\sim \hat{\pi}}[r(s,a)]} - \mathbb{E}_{\blue{s\sim d_{\hat{\pi}}^t}, a\sim \hat{\pi}}[r(s,a)]\right\vert}_{\bm{B}(t)} \nonumber.
\end{align}
As shown in the above equations, $|J(\pi^\ast) - J(\hat{\pi})|$ can be divided into $\bm{A}(t)$: the action difference between $\pi^\ast$ and $\hat{\pi}$ and $\bm{B}(t)$: the state distribution difference between $d_{\pi^\ast}^t$ and $d_{\hat{\pi}}^t$ at each step. we use $\epsilon_t=\mathbb{E}_{s\sim d_{\pi^\ast}^t}\left[\sum_{a\in\mathcal{A}} \left\vert \hat{\pi}(a|s) -  \pi^\ast(a|s) \right\vert \right]$ to denote the pivotal error at step $t$, and $\epsilon = \frac{1}{T}\sum_{t=1}^T \epsilon_t$. $\bm{A}(t)$ can be easily bounded by $\bm{A}(t) \leq \epsilon_t R_{\max}$. For $\bm{B}(t)$, we derive the iterative formula $\bm{B}(t) \leq \bm{B}(t-1) + \epsilon_{t-1} R_{\max}$, which yields $\bm{B}(t)\leq \sum_{i=1}^{t-1} \epsilon_i R_{\max}$. Therefore $\bm{A}(t)+\bm{B}(t)\leq \sum_{i=1}^t \epsilon_i R_{\max}$. Finally, $\left\vert J(\pi^\ast) - J(\hat{\pi}) \right\vert \leq \sum_{t=1}^T \sum_{i=1}^t \epsilon_i R_{\max} \leq T\sum_{t=1}^T \epsilon_t R_{\max} = \epsilon T^2 R_{\max}$.  \qed 


Theorem \ref{thm:expert} ensures that the policy $\hat{\pi}$, trained on $\mathcal{D}_\texttt{syn}$, performs well if it makes fewer errors on pivotal states $s\in \mathcal{S}_e$, {\it i.e.}, small pivotal error $\mathcal{E}_{e \vert \pi^\ast}(\hat{\pi})$. However, this does not account for the benefit of diverse surrounding states in \texttt{M-R} datasets, which helps decrease surrounding error, on the policy performance $J(\hat{\pi})$. Next, we theoretically investigate the contributions of state diversity by demonstrating how surrounding error influences $J(\hat{\pi})$.




\begin{assumption}
\label{asp:1}
    For all $t\in[1,T]$, the 
    inequality holds:
\begin{equation}
    \sum_{s\in\mathcal{S}_e(\pi^\ast)} \left(d_{\pi^\ast}^t(s) - d_{\hat{\pi}}^t(s)\right) \left(R_{\max} - \mathbb{E}_{a\sim \hat{\pi}} \left[r(s,a)\right] \right) \geq 0 \nonumber
\end{equation} 
\end{assumption}
\begin{remark}
Assumption \ref{asp:1} suggests that when the term $d_{\pi^\ast}^t(s) - d_{\hat{\pi}}^t(s)$ is negative, the corresponding reward $\mathbb{E}_{a\sim \hat{\pi}} \left[r(s,a)\right]$ should be close to $R_{\max}$, ensuring that the product is large. This can be derived by a more strict assumption: $d_{\hat{\pi}}^t(s)  \leq d_{\pi^\ast}^t (s)$ holds for all $s\in \mathcal{S}_e^t$ and $t\in [1,T]$, which implies that the learned policy $\hat{\pi}$ visits expert states no more frequently than the expert policy $\pi^\ast$.
\end{remark}

Then with Assumption \ref{asp:1}, we prove the following theorems {\it w.r.t.} the surrounding error.

\begin{theorem}[Main Theorem]
\label{thm:3}
    For policies $\pi^\ast$ and $\hat{\pi}$, if $\mathcal{E}_{e \vert \pi^\ast}(\hat{\pi}) \leq \epsilon$ and $\mathcal{E}_{\mu \vert \pi^\ast}(\hat{\pi}) \leq \epsilon_\mu$, and Assumption \ref{asp:1} holds, then we have $|J(\pi^\ast) - J(\hat{\pi})|   \leq  (\epsilon_\mu T + 3) \epsilon T R_{\max}$.
\end{theorem}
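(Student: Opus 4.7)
\textbf{Proof Plan for Theorem \ref{thm:3}.}

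My plan is to mimic the skeleton of the proof of Theorem \ref{thm:expert} (the $\bm{A}(t) + \bm{B}(t)$ decomposition at every timestep) but to replace the crude estimate of $\bm{B}(t)$ by one that exploits both Assumption \ref{asp:1} and the partition $\mathcal{S} = \mathcal{S}_e(\pi^\ast)\sqcup\mathcal{S}_\mu(\pi^\ast)$. Concretely, I will introduce the shorthand $b_t \coloneqq \Pr_{\hat\pi}(s_t\in\mathcal{S}_\mu)=\sum_{s\in\mathcal{S}_\mu}d_{\hat\pi}^t(s)$ and work with $f(s)\coloneqq R_{\max}-\mathbb{E}_{a\sim\hat\pi}[r(s,a)]\ge 0$. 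Since $\sum_s\bigl(d_{\pi^\ast}^t(s)-d_{\hat\pi}^t(s)\bigr)=0$, adding and subtracting $R_{\max}$ converts $\bm{B}(t)$ into $\bigl|\sum_s(d_{\pi^\ast}^t(s)-d_{\hat\pi}^t(s))f(s)\bigr|$, which splits into a nonnegative pivotal piece (nonnegative precisely by Assumption \ref{asp:1}) and a nonnegative surrounding piece $\sum_{s\in\mathcal{S}_\mu}d_{\hat\pi}^t(s)f(s)\le R_{\max}b_t$. Hence $\bm{B}(t)$ is controlled by a small constant multiple of $R_{\max}b_t$, so after summation $|J(\pi^\ast)-J(\hat\pi)|\le R_{\max}\bigl(\epsilon T + c\sum_{t=1}^T b_t\bigr)$ for a fixed numerical constant.

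The heart of the proof is then an iterative bound on $b_t$. Using the fact that $\pi^\ast$ never exits $\mathcal{S}_e$ (any state reachable from $\mathcal{S}_e$ under a positive-probability $\pi^\ast$ action must itself lie in $\mathcal{S}_e$), for any $s\in\mathcal{S}_e$ one has $\sum_a\pi^\ast(a|s)\mathcal{T}(\mathcal{S}_\mu|s,a)=0$, so the ``flux'' from $\mathcal{S}_e$ to $\mathcal{S}_\mu$ at step $t$ starting from $d_{\pi^\ast}^t$ is at most $\epsilon_t$. Transferring this integration from $d_{\pi^\ast}^t$ to $d_{\hat\pi}^t$ (where Assumption \ref{asp:1}, or in the cleanest case its strengthening $d_{\hat\pi}^t\le d_{\pi^\ast}^t$ on $\mathcal{S}_e$, gives the needed domination) yields
\begin{equation}
b_{t+1}\;\le\;\epsilon_t + \beta_t\,b_t, \qquad b_1=0, \nonumber
\end{equation}
where $\beta_t$ is the conditional probability that $s_{t+1}\in\mathcal{S}_\mu$ given $s_t\in\mathcal{S}_\mu$ under $\hat\pi$. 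Unrolling and then summing gives $\sum_{t=1}^T b_t \le \sum_{t=1}^T\epsilon_t + (\max_t b_t)\sum_{t=1}^T\beta_t$.

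Finally I close the system by noting two things: first, the trivial bound $b_t\le\sum_{k<t}\epsilon_k\le T\epsilon$ via $\beta_t\le 1$; and second, the surrounding-error assumption $\mathcal{E}_{\mu|\pi^\ast}(\hat\pi)\le\epsilon_\mu$, which gives $\sum_{t=1}^T \beta_t \le T\epsilon_\mu$ (interpreting ``average'' in the natural time-averaged sense, as is done for the pivotal error). Combining,
\begin{equation}
\sum_{t=1}^T b_t\;\le\;T\epsilon + (T\epsilon)(T\epsilon_\mu)\;=\;T\epsilon\bigl(1+\epsilon_\mu T\bigr),\nonumber
\end{equation}
which when fed into the $\bm{A}(t)+\bm{B}(t)$ sum produces an upper bound of the claimed form $(\epsilon_\mu T + 3)\epsilon T R_{\max}$ (the constant $3$ absorbing the action-difference term, the pivotal contribution to $\bm{B}(t)$, and the surrounding contribution).

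I expect the main obstacle to be step two: bounding the pivotal contribution to $\bm{B}(t)$ using Assumption \ref{asp:1} alone, rather than the stronger remark-version, so that one really controls $\sum_{\mathcal{S}_e}(d_{\pi^\ast}^t-d_{\hat\pi}^t)\mathbb{E}_{a\sim\hat\pi}[r]$ by $R_{\max}b_t$ up to a small constant. The absolute-value nature of $\bm{B}(t)$ requires both an upper and a lower bound on this inner product, and only Assumption \ref{asp:1}, phrased via $f=R_{\max}-r$, gives the asymmetric sign needed to avoid an uncontrolled $\|d_{\pi^\ast}^t-d_{\hat\pi}^t\|_1$ term on $\mathcal{S}_e$; keeping track of the constants there is what determines whether one lands on $3$ or a slightly larger absolute constant in the final coefficient.
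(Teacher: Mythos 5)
Your plan follows essentially the same route as the paper's proof: the same $\bm{A}(t)+\bm{B}(t)$ decomposition inherited from Theorem \ref{thm:expert}, the same reduction of $\bm{B}(t)$ to the surrounding-occupancy probability $b_t=\sum_{s\in\mathcal{S}_\mu}d_{\hat\pi}^t(s)$, the same one-step recursion $b_{t}\leq \epsilon_{t-1}+\beta_{t-1}b_{t-1}$ obtained by conditioning on whether $s_{t-1}$ is pivotal or surrounding, and the same final assembly using $\beta_t\leq\epsilon_\mu$-type bounds and the crude bound on $b_t$ by partial sums of $\epsilon_i$. Two technical points, both of which you partly anticipate. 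First, the obstacle you flag at the end is real: with only the weighted form of Assumption \ref{asp:1}, your $f(s)=R_{\max}-\mathbb{E}_{a\sim\hat\pi}[r(s,a)]$ trick gives one-sided control of $\sum_s(d_{\pi^\ast}^t(s)-d_{\hat\pi}^t(s))f(s)$ but leaves the pivotal piece $P_1=\sum_{\mathcal{S}_e}(d_{\pi^\ast}^t-d_{\hat\pi}^t)f$ unbounded by $b_t$, so $\bm{B}(t)=|P_1+P_2|$ cannot be closed; the paper resolves this by actually invoking the pointwise domination $d_{\hat\pi}^t(s)\leq d_{\pi^\ast}^t(s)$ on $\mathcal{S}_e$ (the ``more strict'' version from the Remark), under which $\sum_s|d_{\pi^\ast}^t(s)-d_{\hat\pi}^t(s)|=2b_t$ exactly and $\bm{B}(t)\leq 2R_{\max}b_t$. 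Second, your constant bookkeeping is slightly loose: also, $f(s)\leq 2R_{\max}$ since $|r|\leq R_{\max}$, and your bound $b_t\leq\sum_{k<t}\epsilon_k$ is a factor of two weaker than the paper's $b_t=\tfrac{1}{2}C(t)\leq\tfrac{1}{2}\sum_{k<t}\epsilon_k$ (from Eq. \ref{eq:iterative}), so your route lands on $(2\epsilon_\mu T+3)\epsilon TR_{\max}$ rather than $(\epsilon_\mu T+3)\epsilon TR_{\max}$; recovering the stated constant requires carrying the factor $\tfrac{1}{2}$ through the recursion as the paper does.
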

\begin{proof}
    We follow the proof of Theorem \ref{thm:expert} with $| J(\pi^\ast) - J(\hat{\pi})| \leq \sum_{t=1}^T A(t) + B(t)$ and $A(t) \leq \epsilon_t R_{\max}  $. We then proceed from $B(t)\leq R_{\max} \sum_{s\in\mathcal{S}} | d_{\pi^\ast}^t(s) - d_{\hat{\pi}}^t(s)|$. With Assumption \ref{asp:1}, we have
\begin{align}
\label{eq:ct}
     \sum_{s\in\mathcal{S}} & | d_{\pi^\ast}^t(s) - d_{\hat{\pi}}^t(s) | \nonumber \\
     & = \sum_{s\in\mathcal{S}_e} | d_{\pi^\ast}^t(s) - d_{\hat{\pi}}^t(s)|  + \sum_{s\in\mathcal{S}_\mu} |d_{\pi^\ast}^t(s) - d_{\hat{\pi}}^t(s)| \nonumber \\
    & = \sum_{s\in\mathcal{S}_e} d_{\pi^\ast}^t(s) - \sum_{s\in\mathcal{S}_e}  d_{\hat{\pi}}^t(s) + \sum_{s\in\mathcal{S}_\mu}  d_{\hat{\pi}}^t(s) \nonumber \\
    & = 1 -  \sum_{s\in\mathcal{S}_e}  d_{\hat{\pi}}^t(s) + \sum_{s\in\mathcal{S}_\mu}  d_{\hat{\pi}}^t(s) \nonumber \\
    & = 2 \sum_{s\in\mathcal{S}_\mu}  d_{\hat{\pi}}^t(s) 
\end{align}
The second equation follows from Assumption \ref{asp:1} of $d_{\pi^\ast}^t (s) \geq d_{\hat{\pi}}^t(s), \forall s\in \mathcal{S}_e^t$, and from the fact that $d_{\pi^\ast}^t (s)=0, \forall s\in  \mathcal{S}_\mu^t$. The third equation is derived from $\sum_{s\in\mathcal{S}_e} d_{\pi^\ast}^t(s) = 1$.

The term $ \sum_{s\in\mathcal{S}_\mu}  d_{\hat{\pi}}^t(s) $ represents the probability of $\hat{\pi}$ visiting surrounding states at step $t$, denoted as $\mathbf{P}_{\hat{\pi}}(s_t \in \mathcal{S}_\mu)$. Here, we slightly abuse the notation by using $s_t$ to represent the state $s$ in step $t$. Since the previous state $s_{t-1}$ can be either a pivotal state or a surrounding state, the event $s_t \in \mathcal{S}_\mu$ can be further decomposed into two cases: (1) $s_{t-1} \in \mathcal{S}_\mu\cap s_t \in S_\mu$, and (2) $s_{t-1} \in \mathcal{S}_e \cap s_t \in S_\mu$. 
\begin{align}
\label{eq:core}
     & \sum_{s\in\mathcal{S}_\mu}  d_{\hat{\pi}}^t(s) \nonumber \\
     & = \blue{\mathbf{P}_{\hat{\pi}}(s_t \in \mathcal{S}_\mu)} \nonumber \\
     & = \mathbf{P}_{\hat{\pi}}(s_{t-1} \in \mathcal{S}_\mu, s_{t} \in \mathcal{S}_\mu) 
     + \mathbf{P}_{\hat{\pi}}(s_{t-1} \in \mathcal{S}_e, s_{t} \in \mathcal{S}_\mu)  \nonumber \\
     & = \blue{\mathbf{P}_{\hat{\pi}}(s_{t-1} \in \mathcal{S}_\mu)} \mathbf{P}_{\hat{\pi}}(s_{t} \in \mathcal{S}_\mu \vert s_{t-1} \in \mathcal{S}_\mu) \nonumber \\
     & \quad + \mathbf{P}_{\hat{\pi}}(s_{t-1} \in \mathcal{S}_e) \mathbf{P}_{\hat{\pi}}(s_{t} \in \mathcal{S}_\mu \vert s_{t-1} \in \mathcal{S}_e)
\end{align}
Therefore, with $C(t)=\sum_{s\in\mathcal{S}} \left\vert d_{\pi^\ast}^t(s) - d_{\hat{\pi}}^t(s)\right\vert = 2 \mathbf{P}_{\hat{\pi}}(s_t \in \mathcal{S}_\mu)$, we have
\begin{align}
\label{eq:iter2}
    \blue{C(t)} \nonumber  & =  \blue{C(t-1)} \mathbf{P}_{\hat{\pi}}(s_{t} \in \mathcal{S}_\mu \vert s_{t-1} \in \mathcal{S}_\mu)  \nonumber \\
    & \quad + 2 \mathbf{P}_{\hat{\pi}}(s_{t-1} \in \mathcal{S}_e) \mathbf{P}_{\hat{\pi}}(s_{t} \in \mathcal{S}_\mu \vert s_{t-1} \in \mathcal{S}_e)
\end{align}
The term $\mathbf{P}_{\hat{\pi}}(s_{t} \in \mathcal{S}_\mu \vert s_{t-1} \in \mathcal{S}_\mu) $ represents the probability of transitioning from a surrounding state to a surrounding state at step $t-1$, with the bound $\mathbf{P}_{\hat{\pi}}(s_{t} \in \mathcal{S}_\mu \vert s_{t-1} \in \mathcal{S}_\mu)\leq \epsilon_\mu^{t-1}$. Similarity, $\mathbf{P}_{\hat{\pi}}(s_{t} \in \mathcal{S}_\mu \vert s_{t-1} \in \mathcal{S}_e)$ denotes the probability of transitioning from an expert state to a surrounding state at step $t-1$, and only the policy $\hat{\pi}$ behavior different. Thus, $\mathbf{P}_{\hat{\pi}}(s_{t} \in \mathcal{S}_\mu \vert s_{t-1} \in \mathcal{S}_e) \leq \epsilon_{t-1}$. By plugging the above equation into Eq. \ref{eq:iter2} with $\mathbf{P}_{\hat{\pi}}(s_{t} \in \mathcal{S}_\mu \vert s_{t-1} \in \mathcal{S}_\mu)\leq \epsilon_\mu^{t-1}$ and $\mathbf{P}_{\hat{\pi}}(s_{t} \in \mathcal{S}_\mu \vert s_{t-1} \in \mathcal{S}_e) \leq \epsilon_{t-1}$, and $C(t-1) \leq \sum_{i=1}^{t-2} \epsilon_i$ (Eq. \ref{eq:iterative}), for $T\geq 3$, we have
\begin{align}
    C(t) \leq \epsilon_\mu^{t-1} \sum_{i=1}^{t-2} \epsilon_i +   2 \epsilon_{t-1} \nonumber
\end{align}
Therefore,
\begin{align}
    \vert & J(\pi^\ast) -  J(\hat{\pi})\vert \nonumber \\
    & = \sum_{t=1}^T A(t) + B(t) \nonumber\\
    & \leq R_{\max} (2\epsilon_1 + \epsilon_2) + R_{\max} \sum_{t=3}^T (\epsilon_t + \epsilon_\mu^{t-1} \sum_{i=1}^{t-2} \epsilon_i + 2 \epsilon_{t-1}) \nonumber \\
    & \leq 3\epsilon T R_{\max} + R_{\max} \sum_{t=3}^T (\epsilon_\mu^{t-1} \sum_{i=1}^{t-2} \epsilon_i) \nonumber\\
    & \leq 3\epsilon T R_{\max} + \epsilon_\mu \epsilon T^2 R_{\max} \nonumber
\end{align}

The proof is completed.
\end{proof}
As $\epsilon$ approaches zero, the bound in Theorem \ref{thm:3} similarly converges to zero, making it non-vacuous {\it w.r.t.} the pivotal error $\epsilon$. Furthermore, Theorem \ref{thm:3} provides a tighter bound than Theorem \ref{thm:expert} when $\epsilon_\mu \leq \frac{T-3}{T}$, a condition commonly met when $T$ is large.

\begin{figure}[t]
\centering
\begin{subfigure}[b]{0.23\textwidth}
    \centering
    \includegraphics[width=1.06\columnwidth]{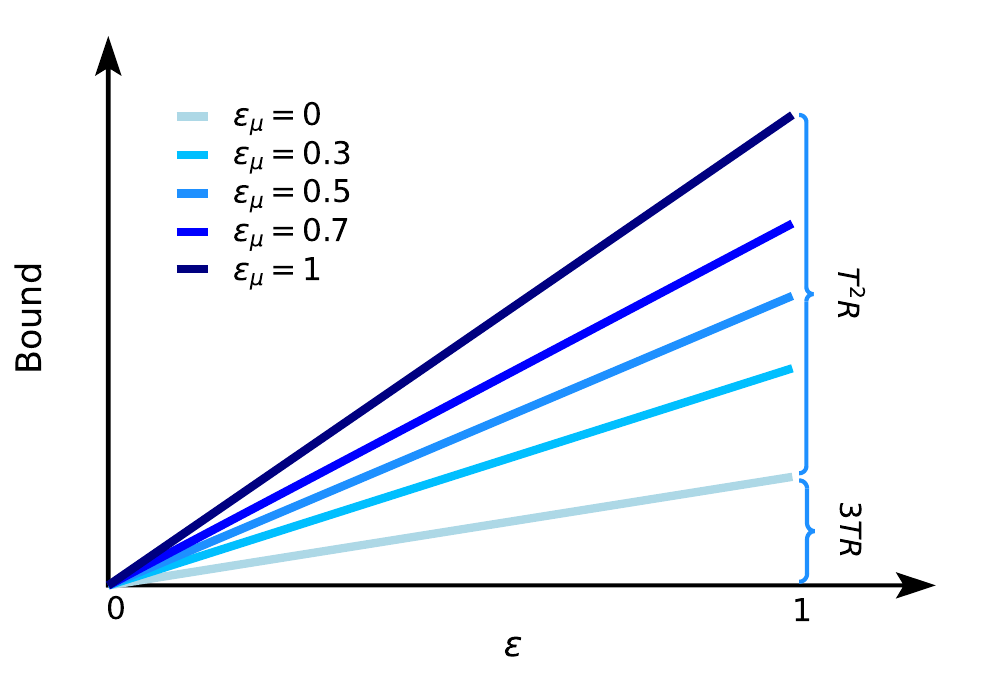}
    \caption{}
    \label{figure:thm eps}
\end{subfigure}
\hfill
\begin{subfigure}[b]{0.24\textwidth}
    \centering
    \includegraphics[width=1.06\columnwidth]{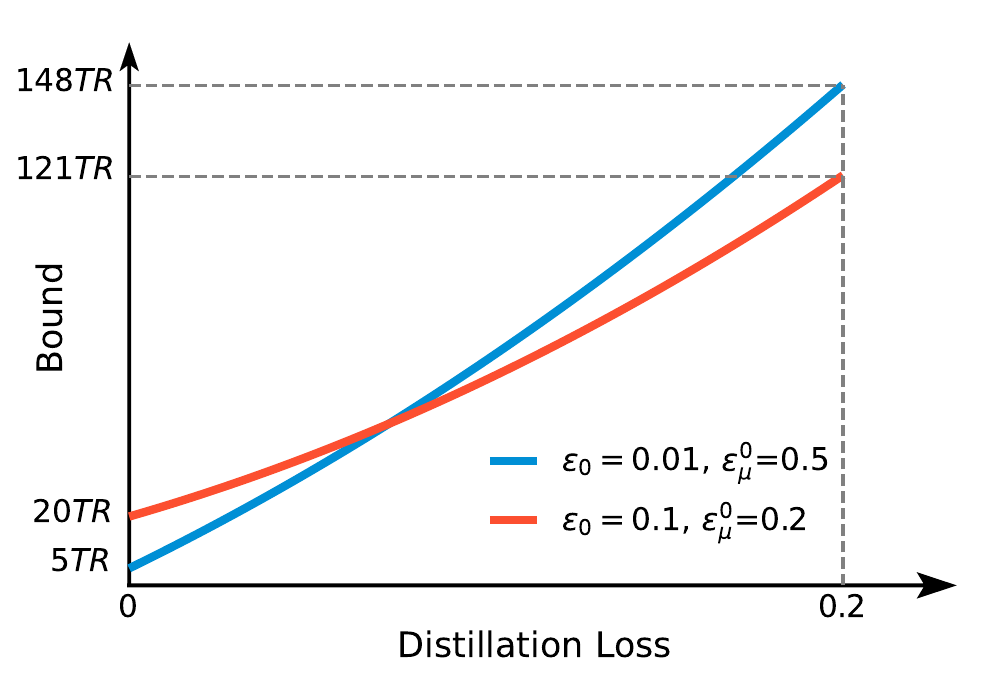}
    \caption{}
    \label{figure:thm error}
\end{subfigure}
\caption{Policy gap bound as a function of (a) pivotal error $\epsilon$ under different surrounding error $\epsilon_\mu$; (b) distillation loss. $R$ denotes $R_{\max}$ for simplicity.}
\label{figure:theorem}
\end{figure}

\vspace{2mm}
\noindent \textbf{Effect of $\epsilon_\mu$ \ \ }  Theorem \ref{thm:3} establishes that the performance gap between $\hat{\pi}$ and $\pi^\ast$ is bounded by $(\epsilon_\mu T +3) \epsilon T R_{\max}$, where $\hat{\pi}$ has a pivotal error of $\epsilon$ and surrounding error $\epsilon_\mu$. To illustrate the impact of surrounding error $\epsilon_\mu$, we plot the relationship between the bound and pivotal error $\epsilon$ under varying levels of $\epsilon_\mu$, as shown in Figure \ref{figure:thm eps}. When $\epsilon$ approaches zero, the bound also approaches zero, and the influence of $\epsilon_\mu$ is marginal. However, as the pivotal error increases, the effect of $\epsilon_\mu$ on the policy gap becomes significant, with large $\epsilon_\mu$ substantially increasing the bound ({\it e.g.} $3TR \rightarrow (T+3)TR$ when $\epsilon=1$). 
In OBD, the pivotal error cannot be minimized to such a small value due to the complex bi-level optimization. Consequently, surrounding error play a more crucial role in determining the performance of policies trained on distilled data.

\vspace{2mm}

\noindent \textbf{Effect of Real Dataset \ \ } 
Given the original dataset $\mathcal{D}_\texttt{real}$, if a policy can perfectly interpolate $\mathcal{D}_\texttt{real}$, it will exhibit the pivotal error $\epsilon=\epsilon_0$ and surrounding error $\epsilon_\mu=\epsilon_\mu^0$. Here we slightly abuse the notation $\epsilon_0$ and $\epsilon_\mu^0$. When applying OBD on $\mathcal{D}_\texttt{real}$, there is an additional distillation loss $\mu$. We assume that policies trained on the corresponding distilled dataset have $\epsilon = \epsilon_0 + \mu$ and $\epsilon_\mu  = \epsilon_\mu^0 + \mu $. To illustrate the effect of dataset characteristics on OBD, we compare two different original datasets: $\mathcal{D}_\texttt{real}^\texttt{piv}$ that excels in reducing pivotal error ($\epsilon_0=0.01$, $\epsilon_\mu^0 = 0.5$) and $\mathcal{D}_\texttt{real}^\texttt{surr}$ that performs well in minimizing surrounding error ($\epsilon_0=0.1$, $\epsilon_\mu^0=0.2$), simulating the \texttt{M-E} and \texttt{M-R} datasets, respectively. With a horizon length of $T=1000$, we analyze the policy gap bound under different values of $\mu$, as shown in Figure \ref{figure:thm error}. The results show that when $\mu$ is small, $\mathcal{D}_\texttt{real}^\texttt{piv}$ yields higher-quality synthetic data with a lower bound. However, when distillation loss cannot be effectively minimized (common in OBD), $\mathcal{D}_\texttt{real}^\texttt{surr}$ can produce better distilled data, {\it i.e.}, policies trained on the distilled data has a lower performance gap bound. To this end, the original dataset with diverse states, which benefits small surrounding error, is critical for OBD.


To remove the dependency on 
$\mathbb{E}_{s\sim d_\pi^\ast(s)}$ when computing pivotal error, we further derive the high probability bound by following the proof in \citep{mohri2018foundations} that mainly utilizes Hoeffding's inequality and the union bound.
\begin{corollary}
\label{corollary:1}
The expert dataset $\{(s_i, \pi^\ast(s_i))\}_{i=1}^m$ are i.i.d drawn from $d_{\pi^\ast}(s)$. Suppose $\pi^\ast$ and $\hat{\pi}$ are deterministic and the provided function class $\Pi$ has finite $|\Pi|$. If $\frac{1}{m}\sum_{i=1}^m \mathbb{I}\left(  \hat{\pi}(s_i) \neq \pi^*(s_i) \right)    \leq \epsilon$, then, with the probability of at least $1-\delta$  over a sample of size $m$, the following equality holds for all $ \pi \in \Pi:$ 
\begin{align}
\frac{\vert J(\pi^*) - J(\pi)\vert}{TR_{\max}} \leq \left(\epsilon_\mu T +3\right) \left(\epsilon + \sqrt{\frac{\log |\Pi|+\log \frac{2}{\delta}}{2 m}}\right) \nonumber
\end{align}
\end{corollary}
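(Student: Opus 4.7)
The plan is to lift the empirical sample-based guarantee on the training states into a population-level bound on the pivotal error $\mathcal{E}_{e \vert \pi^\ast}(\hat{\pi})$ via a standard uniform concentration argument, and then feed the resulting population bound into Theorem \ref{thm:3}. The statistical machinery needed is Hoeffding's inequality together with a union bound over the finite policy class $\Pi$, exactly in the spirit of a classical PAC generalization result for $0$--$1$ loss.

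First, I would fix a single $\pi \in \Pi$ and define $X_i(\pi) = \mathbb{I}(\pi(s_i) \neq \pi^\ast(s_i)) \in [0,1]$. Since $(s_i)$ are i.i.d.\ from $d_{\pi^\ast}$ and both policies are deterministic, these $X_i$ are i.i.d.\ Bernoulli random variables whose mean equals the population disagreement probability $\Pr_{s \sim d_{\pi^\ast}}(\pi(s) \neq \pi^\ast(s))$, which coincides (under the convention of Corollary \ref{corollary:1}) with $\mathcal{E}_{e \vert \pi^\ast}(\pi)$. Applying one-sided Hoeffding's inequality yields, for every $t>0$,
\[
\Pr\!\left( \mathcal{E}_{e \vert \pi^\ast}(\pi) - \tfrac{1}{m}\sum_{i=1}^m X_i(\pi) \geq t \right) \leq \exp(-2mt^2).
\]
A union bound over the $|\Pi|$ hypotheses then gives that, with probability at least $1-\delta$,
\[
\mathcal{E}_{e \vert \pi^\ast}(\pi) \leq \tfrac{1}{m}\sum_{i=1}^m \mathbb{I}(\pi(s_i) \neq \pi^\ast(s_i)) + \sqrt{\tfrac{\log |\Pi| + \log(2/\delta)}{2m}}
\]
simultaneously for all $\pi \in \Pi$, where the factor $2/\delta$ arises from equating $|\Pi|\exp(-2mt^2)$ with $\delta/2$ and absorbing the constant into the logarithm.

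Next, I would condition on this high-probability event and apply it to the trained policy $\hat{\pi}$. The hypothesis $\tfrac{1}{m}\sum_i \mathbb{I}(\hat{\pi}(s_i) \neq \pi^\ast(s_i)) \leq \epsilon$ together with the uniform bound yields
\[
\mathcal{E}_{e \vert \pi^\ast}(\hat{\pi}) \leq \epsilon + \sqrt{\tfrac{\log |\Pi| + \log(2/\delta)}{2m}}.
\]
Plugging this inflated pivotal error, together with the assumed surrounding error bound $\mathcal{E}_{\mu \vert \pi^\ast}(\hat{\pi}) \leq \epsilon_\mu$, into Theorem \ref{thm:3} gives $|J(\pi^\ast) - J(\hat{\pi})| \leq (\epsilon_\mu T + 3)(\epsilon + \sqrt{(\log|\Pi|+\log(2/\delta))/(2m)})\,T R_{\max}$, and dividing both sides by $T R_{\max}$ yields exactly the claim.

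The argument is largely routine once Theorem \ref{thm:3} is in hand, so there is no single deep obstacle; the main piece of care is bookkeeping the correspondence between the per-state action discrepancy $\sum_a |\hat{\pi}(a|s)-\pi^\ast(a|s)|$ used in the definition of $\mathcal{E}_{e \vert \pi^\ast}$ and the $0$--$1$ disagreement indicator $\mathbb{I}(\hat{\pi}(s)\neq\pi^\ast(s))$ appearing in the empirical error. For deterministic $\pi^\ast$ and $\hat{\pi}$ these differ by at most a constant factor, so the identification is harmless and can be absorbed into the convention of the statement (or into the constant inside the square root). A second minor point is ensuring the Hoeffding bound is applied one-sidedly so that the direction of the inequality matches what is needed, which is why $\log(2/\delta)$ rather than $\log(1/\delta)$ appears.
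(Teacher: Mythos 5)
Your proposal matches the paper's proof essentially verbatim: the paper also applies Hoeffding's inequality plus a union bound over the finite class $\Pi$ (packaged as Lemma~\ref{lemma:bound}, following Theorem~2.2 of \citet{mohri2018foundations}) to convert the empirical disagreement rate into a population bound, and then substitutes the result into Theorem~\ref{thm:3}; your one-sided-Hoeffding-with-$\delta/2$ bookkeeping and the paper's two-sided version yield the identical $\sqrt{(\log|\Pi|+\log(2/\delta))/(2m)}$ term. If anything you are slightly more careful than the paper, since you explicitly flag the factor-of-two gap between the $0$--$1$ disagreement indicator and the total-variation quantity $\sum_a|\hat{\pi}(a|s)-\pi^\ast(a|s)|$ in the definition of $\mathcal{E}_{e\vert\pi^\ast}$ for deterministic policies, which the paper's proof silently elides.
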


\begin{algorithm}[t]
\SetKwInOut{KwIn}{Input}
\SetKwInOut{KwOut}{Output}
\SetKw{KwBy}{by}
\caption{State density weighted OBD}
\label{alg:obd}
\KwIn{Offline RL dataset $\mathcal{D}_\texttt{off}$, synthetic data size $N_\texttt{syn}$, loop step $T_\texttt{in}$, $T_\texttt{out}$, learning rate $\alpha_0$, $\alpha_1$, momentum rate $\beta_0$, $\beta_1$, SDW intensity $\tau$}
\KwOut{Synthetic dataset $\mathcal{D}_\texttt{syn}$}
$\pi^*, q_{\pi^*} \leftarrow \texttt{OfflineRL}(\mathcal{D}_\texttt{off})$ \\
Initialize $\mathcal{D}_\texttt{syn}=\{(s_i, a_i)\}_{i=1}^{{N}_\texttt{syn}}$ by randomly sampling $(s_i, a_i)\sim \mathcal{D}_\texttt{off}$ \\
Construct $\mathcal{D}_\texttt{real}=\{(s_i, \pi^\ast(s_i))\}_{i=1}^{N_\texttt{off}}$ for $s_i \in \mathcal{D}_\texttt{off}$ \\
$d(s) \leftarrow \texttt{DensityEstimation}(\mathcal{D}_\texttt{real})$ \\
\For{$t_\texttt{out}=1$ \textbf{\text{to}} ${T}_\texttt{out}$}{
Randomly initialize policy network parameters $\theta_0$ \\
$\triangleright$ Behavioral cloning with synthetic data.\\
\For{$t=1$ \textbf{\text{to}} ${T}_\texttt{in}$}{
Compute the BC loss {\it w.r.t.} synthetic data $\mathcal{L}_{t-1} = \ell_\texttt{BC}(\theta_{t-1}, \mathcal{D}_\texttt{syn})$ \\
Update $\theta_t \leftarrow \texttt{GradDescent}(\nabla_{\theta_{t-1}} \mathcal{L}_{t-1}, \alpha_0, \beta_0)$
}
Sample a minibatch $\mathcal{B}=\{(s_i,a_i)\}_{i=1}^{|\mathcal{B}|}$ from $\mathcal{D}_\texttt{real}$ \\
Compute $\mathcal{H}(\pi_{\theta}, \mathcal{B}) = \frac{1}{\vert \mathcal{B} \vert} \sum_{i=1}^{|\mathcal{B}|} \frac{q_{\pi^*}(s_i, a_i)}{\blue{ d^\tau(s_i)}} \left(\pi_{\theta}\left(s_i\right) - a_i \right)^2$ \\
Update $\mathcal{D}_\texttt{syn} \leftarrow \texttt{GradDescent}(\nabla_{\mathcal{D}_\texttt{syn}} \mathcal{H}(\pi_{\theta},\mathcal{B}), \alpha_1, \beta_1)$ \\
}
\end{algorithm}

\begin{figure}[t]
\centering
\begin{subfigure}[b]{0.15\textwidth}
    \centering
    \includegraphics[width=\columnwidth]{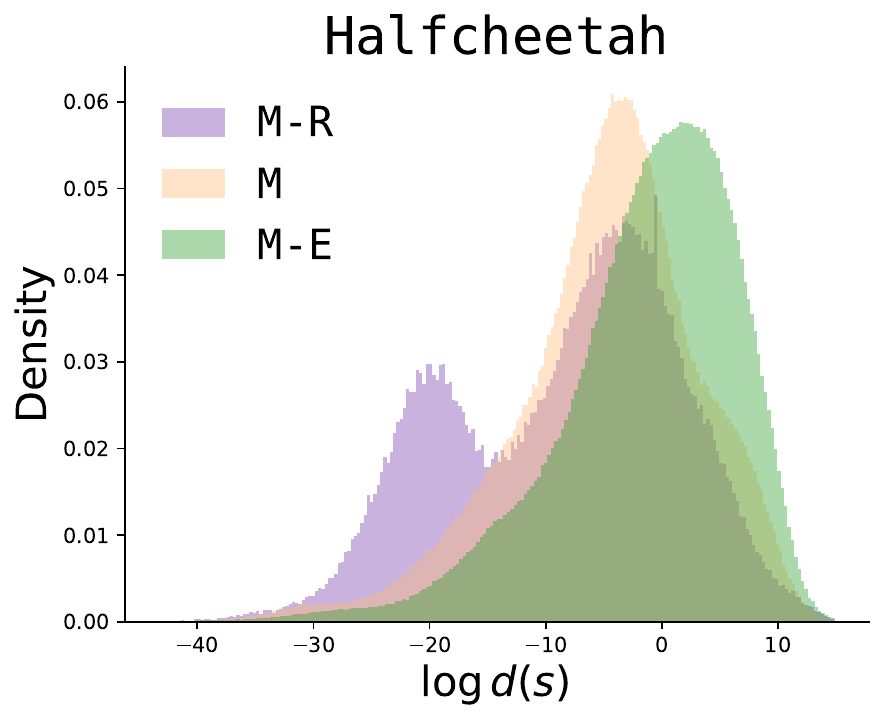}
    \label{figure:halfcheetah hist}
\end{subfigure}
\hfill
\begin{subfigure}[b]{0.15\textwidth}
    \centering
    \includegraphics[width=\columnwidth]{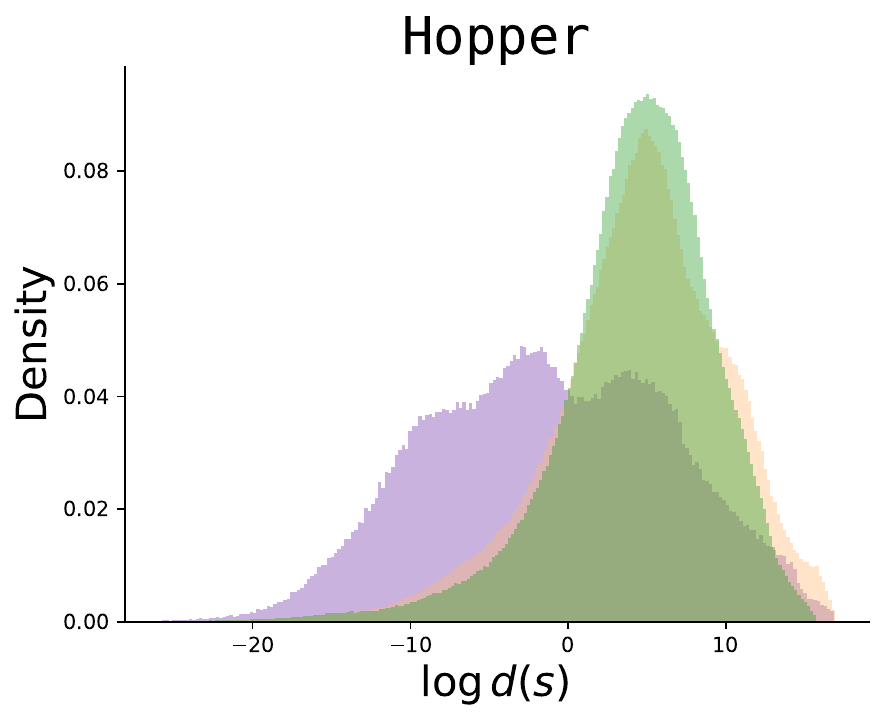}
    \label{figure:hopper hist}
\end{subfigure}
\hfill
\begin{subfigure}[b]{0.15\textwidth}
    \centering
    \includegraphics[width=\columnwidth]{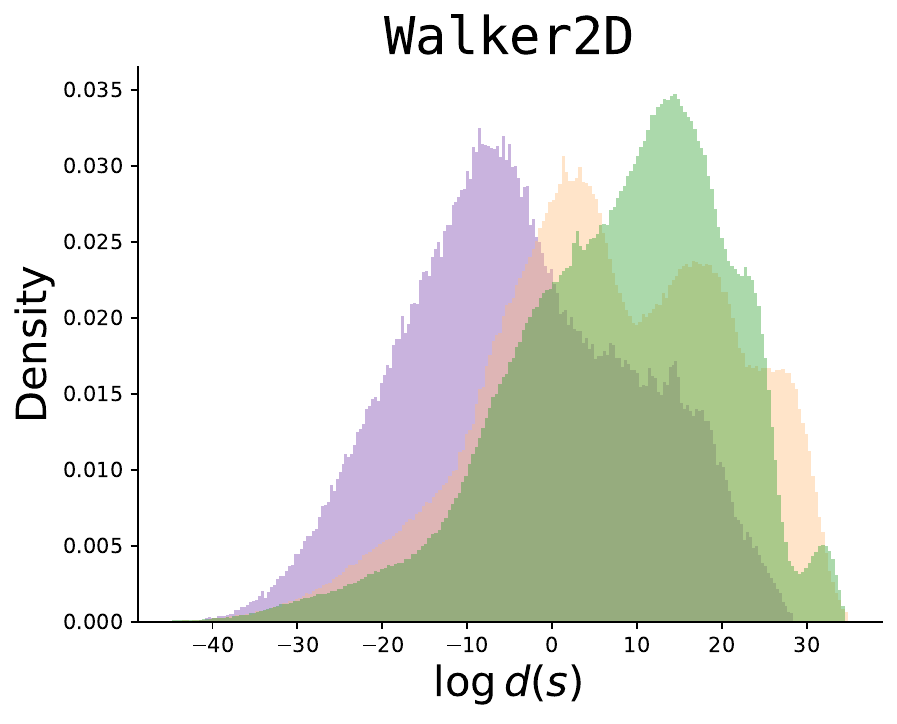}
    \label{figure:walker2d hist}
\end{subfigure}
\caption{The distribution of state density $d(s)$. The y-axis represents the density of the value $\log d(s)$, analogous to a probability density function.}
\label{figure:density hist}
\end{figure}

\begin{table*}[t]
\scriptsize
    \centering
    \caption{Offline behavior distillation performance on D4RL offline datasets. The result for Random Selection (Random) is obtained by repeating $10$ times. For DBC, PBC, and Av-PBC, the results are averaged across five seeds and the last five evaluation steps. The best OBD result for each dataset is marked with \textbf{bold} scores.}
    \setlength{\tabcolsep}{10pt} 
    \resizebox{\linewidth}{!}{
    \begin{tabular}{c  cc  cc  cc c}
    \toprule
     \multirow{2}{*}{Method}  & \multicolumn{2}{c}{Halfcheetach} & \multicolumn{2}{c}{Hopper} & \multicolumn{2}{c}{Walker2D} & \multirow{2}{*}{Average} \\
     & M & M-E & M & M-E & M & M-E \\
     \specialrule{0.7pt}{0.2\jot}{0.2pc}
     Rand ($\mathcal{D}_\texttt{off}$) &  $1.8$ & $2.0$  & $19.2$ & $11.6$ &  $4.9$ & $6.7$ & $7.7$   \\
     Rand ($\mathcal{D}_\texttt{real}$) &  $5.9$ & $7.8$  & $29.1$ & $27.1$ &  $17.1$ & $17.8$ & $17.5$   \\
     DBC &   $28.2$ & \bm{$29.0$} &   {$37.8$} & $31.1$ &   $29.3$ & $11.7$   & $27.9$ \\
     PBC &   $30.9$ & $20.5$ &   $25.1$ & $33.4$ &   $33.2$ & $34.0$   & $29.5$ \\
     Av-PBC &  {$36.9$} & $22.0$ &  $32.5$ & {$38.7$} &  {$39.5$} & {$42.1$}   & {$35.3$}\\
     SDW (ours) &  \bm{$39.5$} & $25.0$ &  \bm{$38.4$} & \bm{$42.6$} &  \bm{$42.5$} & \bm{$44.6$}   & \bm{$38.8$}\\
     \specialrule{0.5pt}{0.2\jot}{0.2pc}
     BC (Whole) &   $42.3$ & $59.8$ &   $50.2$ & $51.7$ &   $65.9$ & $89.6$ & $59.9$  \\
     OffRL (Whole) &   $47.6$ & $50.8$ &   $56.4$ & $107.3$ &  $84.0$ & $109.0$ & $75.6$  \\
    \bottomrule
    \end{tabular}
    }
    \label{tab:obd result}
\end{table*}
\section{State Density Weighted OBD}
\label{sec:exp}



The preceding analysis highlights the benefits of state diversity in OBD. To ensure that diverse state information is more evenly captured by distilled synthetic dataset, thereby decreasing surrounding error of policies trained on it, we revise the Av-PBC method by explicitly integrating state density probability into the OBD objective. Specifically, the state density for original dataset $\mathcal{D}_\texttt{real}$ is denoted as $d(s)$. Then, we leverage $\frac{1}{d(s)}$ to weight OBD objective, so that rare states in $\mathcal{D}_\texttt{real}$ receive greater attention during the distillation process. The resulting \textit{\textbf{s}tate \textbf{d}ensity \textbf{w}eighted} (\textbf{SDW}) OBD objective is presented below:
\begin{align}
    \mathcal{H}(\pi, \mathcal{D}_\texttt{real})= \mathbb{E}_{(s,a)\sim \mathcal{D}_\texttt{real}} \left[\frac{q_{\pi^*}(s,a)}{\blue{d^\tau(s)}} \left(\pi\left(s\right) - a  \right)^2 \right], \nonumber
\end{align}
where the hyperparameter $\tau \geq 0$ controls the intensity of density weighting: a larger $\tau$ places greater emphasis on rare states, while $\tau=0$ indicates that the absence of density weighting. The pseudo-code for SDW is in Algorithm \ref{alg:obd}. 

We utilize Masked Autoregressive Flow \citep{papamakarios2017masked} to estimate $d(s)$ for $s \in \mathcal{D}_\texttt{real}$, with the distribution of $\log d(s)$ depicted in Figure \ref{figure:density hist}. 
The plots reveal that states in \texttt{M} and \texttt{M-E} exhibits significantly higher $d(s)$ values compared to those in \texttt{M-R}, considering the logarithmic scale on the x-axis. Consequently, we primarily evaluate SDW on \texttt{M} and \texttt{M-E} data, as their limited state diversity leads to unsatisfactory OBD performance.

\begin{table*}[t]
\scriptsize
    \centering
    \caption{Offline behavior distillation performance across various policy network architectures and optimizers (Optim). {\color{red} Red-colored} scores and {\color{ForestGreen} green-colored} scores in brackets denote the performance degradation and improvement, respectively, compared to the default training setting. Better OBD result between SDW and Av-PBC for each cross scenario is marked with \textbf{bold} scores. The results are averaged over five random seeds and the last five evaluation steps.}
    \setlength{\tabcolsep}{7pt} 
    \resizebox{\linewidth}{!}{
    \begin{tabular}{c c  cc  cc  cc cc}
    \toprule
     & \multirow{2}{*}{Arch/Opt}  & \multicolumn{2}{c}{Halfcheetach} & \multicolumn{2}{c}{Hopper} & \multicolumn{2}{c}{Walker2D} & \multirow{2}{*}{\makecell{SDW \\ Average}} & \multirow{2}{*}{\makecell{Av-PBC \\ Average}} \\
     & & M & M-E & M & M-E & M & M-E  \\
     \specialrule{0.7pt}{0.2\jot}{0.2pc}
     \multirow{5}{*}{\rotatebox{90}{Architecture}} & $2$-layer & $37.4$ & $7.5$ & $25.9$ & $49.9$ & $33.3$ & $50.8$ & \redformat{\bm{34.1}}{4.7} & $33.4$\\
     &$3$-layer & $40.7$ & $21.9$ & $30.5$ & $43.9$ & $48.2$ & $55.3$ & \greenformat{\bm{40.1}}{1.3} & $37.9$  \\
     &$5$-layer & $39.3$ & $24.5$ & $33.6$ & $38.9$ & $41.2$ & $38.9$ & \redformat{\bm{36.1}}{2.7} & $34.6$ \\
     &$6$-layer & $37.0$ & $21.6$ & $32.8$ & $34.1$ & $35.7$ & $34.3$ & \redformat{\bm{32.7}}{6.1} & $27.5$ \\
     &Residual & $38.6$ & $20.5$ & $35.4$ & $43.4$ & $39.0$ & $41.2$ & \redformat{\bm{36.4}}{2.4} & $33.0$ \\
     \specialrule{0.5pt}{0.2\jot}{0.2pc}
     \multirow{3}{*}{\rotatebox{90}{Optim}}& Adam & $39.3$ & $24.5$ & $39.7$ & $41.6$ & $43.9$ & $43.1$ & \redformat{\bm{38.7}}{0.1} & $36.8$\\
     & AdamW & $39.1$ & $24.6$ & $37.4$ & $42.1$ & $44.8$ & $44.6$ & \grayformat{\bm{38.8}}{0.0} & $36.6$\\
     & SGDm & $39.7$ & $24.0$ & $37.1$ & $39.2$ & $44.1$ & $43.2$ & \redformat{\bm{37.9}}{0.9} & $35.1$ \\
    \bottomrule
\end{tabular}
    }
    \label{tab:cross generalization}
\end{table*}


\vspace{1mm}
\noindent \textbf{Setup \ \ }
The advanced offline RL algorithm of Cal-QL \citep{nakamoto2023calql} is employed to extract $\pi^*$ and $q_{\pi^*}$ from $\mathcal{D}_\texttt{off}$. 
$\tau$ is set to $0.1$ in SDW. We use a four-layer MLP as the default architecture of policy networks. The size of synthetic data $N_\texttt{syn}$ is $256$. Standard SGD is employed in both inner and outer optimization, and learning rates $\alpha_0=0.1$ and $\alpha_1=0.1$ for the inner and outer loop, respectively, and corresponding momentum rates $\beta_0=0$ and $\beta_1=0.9$. 
More details can be found in Appendix \ref{app:implementation details}.

\vspace{1mm}
\noindent \textbf{Evaluation \ \ } 
To evaluate $\mathcal{D}_\texttt{syn}$, we train policies on $\mathcal{D}_\texttt{syn}$ with 
BC, and obtain the averaged return by interacting with the environment for $10$ episodes. We use \texttt{\small{normalized return}} 
for better visualization: $\texttt{\small{normalized return}}=100 \times \frac{\texttt{return - random return}}{\texttt{expert return - random return}}$, where \texttt{\small{random return}} and \texttt{\small{expert return}} denote expected returns of random policies and the expert policy (online SAC \citep{haarnoja2018soft}), respectively.

\vspace{1mm}
\noindent \textbf{Baselines \ \ } 
(1) {\it Rand ($\mathcal{D}_\texttt{off}$)}: randomly selects $N_\texttt{syn}$ state-action pairs from $\mathcal{D}_\texttt{off}$; (2) {\it Rand ($\mathcal{D}_\texttt{real}$)}: randomly selects $N_\texttt{syn}$ state-action pairs from $\mathcal{D}_\texttt{real}$; (3) {\it Data-based BC (DBC)}: minimizes $\mathcal{H}_{\texttt{DBC}}=\mathbb{E}_{(s,a)\sim\mathcal{D}_\texttt{off}}\left[\left(\pi(s) - a\right)^2 \right]$; (4) {\it PBC}; (5) {\it Av-PBC}. 
We also report results of behavioral cloning and Cal-QL in terms of training on the whole offline dataset $\mathcal{D}_\texttt{off}$ for a comprehensive comparison.

\subsection{Main Results}
We compare our approach of SDW with other OBD algorithms (DBC, PBC, Av-PBC) across offline datasets of varying quality and environments, as shown in Table \ref{tab:obd result}. From the results, we have several observations: (1) SDW achieves the best performance among these OBD approaches; and (2) SDW consistently outperforms Av-PBC on all \texttt{M} and \texttt{M-E} dataset ($38.8$ \textit{vs.} $35.3$), which illustrate the effectiveness of SDW in mitigating poor state diversity in original data.

\vspace{1mm}
\noindent\textbf{Cross Architecture/Optimizer \ \ }
Although results in Table \ref{tab:obd result} are obtained by default configuration, we further assess synthetic data distilled by SDW across (1) multiple policy network architectures (2/3/5/6-layer and residual MLPs) and (2) different optimizers (Adam, AdamW, and SGDm with momentum=$0.9$). The results are shown in Table \ref{tab:cross generalization}, and we observe the following: (1) despite a slight decrease, synthetic datasets distilled by SDW remain effective for training various policy networks (SDW Average); (2) synthetic behavioral datasets exhibit greater robustness to variations in optimizers compared to differences in architectures; and (3) SDW consistently outperform Av-PBC across different training transfer scenarios (SDW Avg {\it vs.} Av-PBC Avg). Therefore, SDW-distilled datasets show satisfactory cross-architecture/optimizer performance.

\begin{figure}[t]
\centering
\begin{subfigure}{0.15\textwidth}
    \centering
    \includegraphics[width=\columnwidth]{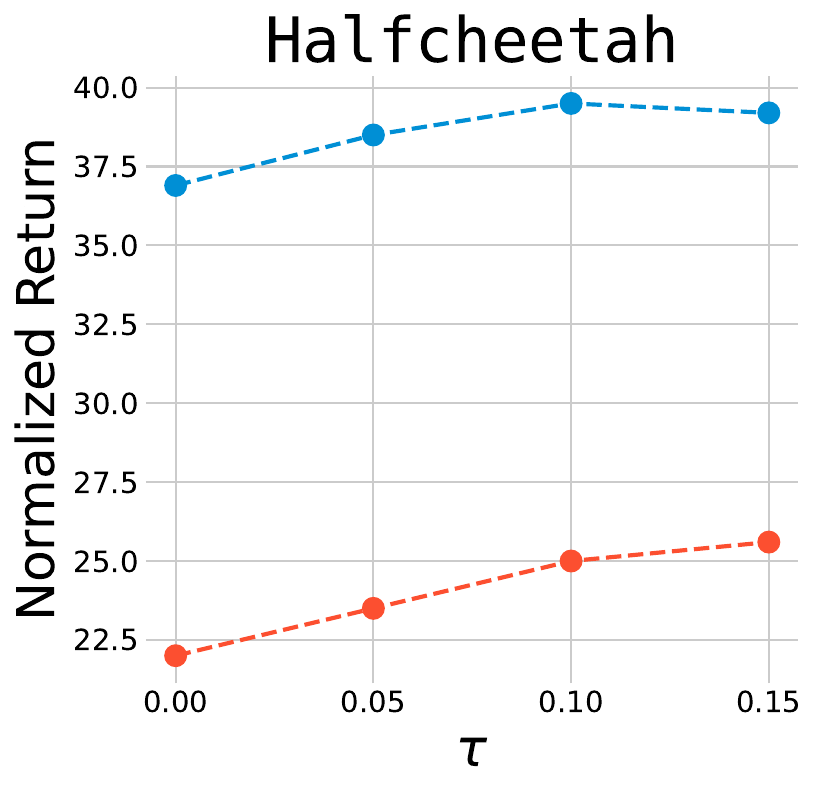}
    \label{figure:halfcheetah tau}   
\end{subfigure}
\hfill
\begin{subfigure}{0.15\textwidth}
    \centering
    \includegraphics[width=0.98\columnwidth]{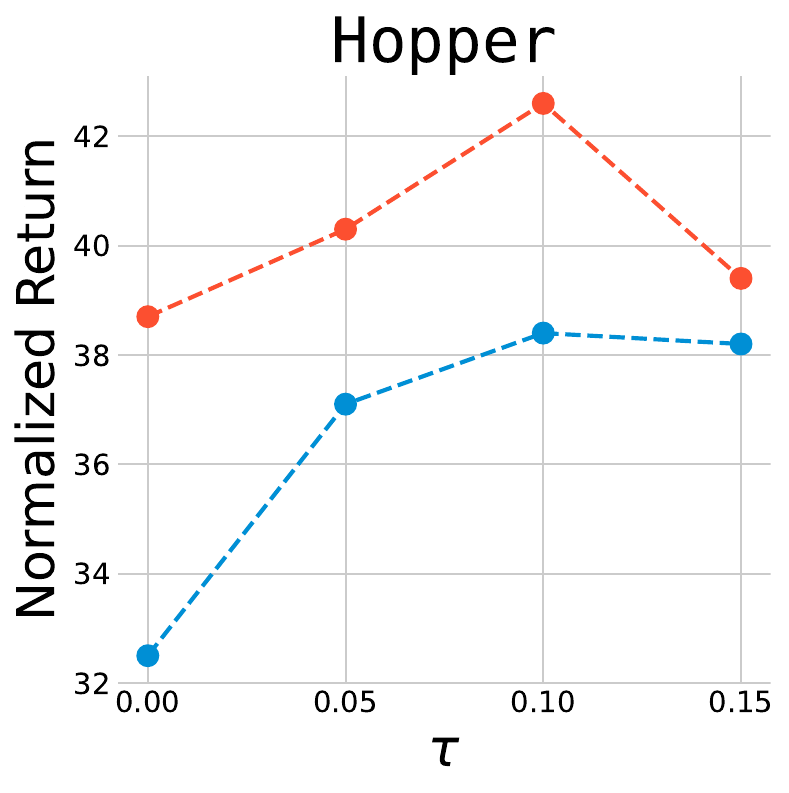}
    \label{figure:hopper tau}   
\end{subfigure}
\hfill
\begin{subfigure}{0.15\textwidth}
    \centering
    \includegraphics[width=0.98\columnwidth]{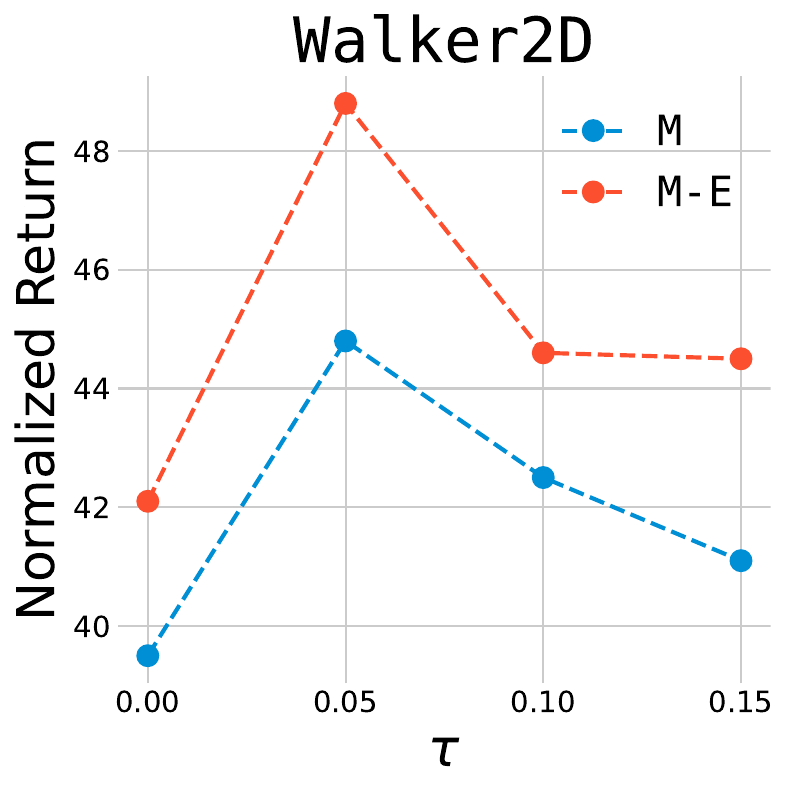}
    \label{figure:walker2d tau}   
\end{subfigure}
 \caption{Plots of SDW performance (normalized return) as functions of $\tau$. Each point is averaged on five trials.}
\label{figure:tau}
\end{figure}

\vspace{1mm}
\noindent\textbf{Hyperparamter Analysis \ \ }
The hyperparamter $\tau$, which determines the weight $\frac{1}{d^\tau(s)}$, is crucial to the performance of SDW. A larger $\tau$ assigns greater weights to rare states and is particularly beneficial when the original dataset exhibits limited state diversity. To investigate the impact of $\tau$, we conduct OBD using various $\tau$ ($0.05, 0.1, 0.15$). The results are presented in Figure \ref{figure:tau}. Our observations indicate that: (1) SDW with different $\tau$ values consistently outperform the baseline of $\tau=0$; and (2) difference datasets have distinct optimal $\tau$ values ($\tau=0.15$ for \texttt{Halfcheetah}, $\tau=0.1$ for \texttt{Hopper}, and $\tau=0.05$ for \texttt{Walker2D}). 

\section{Discussion and Limitation}
\label{sec:discussion}

\noindent\textbf{Discussion\ \ }
While data diversity is widely recognized as a cornerstone of modern model training, few studies have investigated how to balance improvements in diversity and data quality under constrained resource budgets. In this paper, we demonstrate that data diversity plays a more prominent role in the distillation setting compared to standard model training. This is largely due to the non-linear relationship between model performance and training loss: policies trained on more diverse datasets exhibit less performance degradation as training loss increases (Figure~\ref{figure:error_vs_return}). Consequently, a diverse dataset may be more beneficial for OBD, even if it appears inferior to high-quality datasets in direct policy learning.


\vspace{2mm}

\noindent\textbf{Limitations \ \ }
Our proposed method of state density weighted OBD requires the estimation of state density, making the performance of SDW dependent on the accuracy of density estimation. Moreover, offline RL datasets display varying levels of state diversity, necessitating different optimal values of $\tau$. For instance, datasets with lower state diversity intuitively require a larger $\tau$ to enhance the weighting effect. Therefore, how to automatically select a proper $\tau$ for specific original datasets is a promising area for future research.




\section{Conclusion}
In this paper, we underscore the essential role of state diversity in OBD. We begin by uncovering misalignment between original and distilled datasets: although policies trained on \texttt{M-E} (higher state quality) generally outperform those trained on \texttt{M-R} (greater state diversity), synthetic datasets distilled from \texttt{M-R} enable better policy training than those distilled from \texttt{M-E}. Through empirical analysis, we demonstrate that datasets with greater state diversity have improved resilience to significant training loss, contributing to this misalignment in OBD, where 
the training loss is challenging to minimize. By associating state quality and diversity in reducing pivotal and surrounding error, respectively, our theoretical investigation further elucidates that policy performance is influenced by both pivotal and surrounding errors, with the latter becoming increasingly important when pivotal error is pronounced, as is the case in OBD. 
We then propose a novel algorithm, SDW, which prioritizes state diversity by weighting the OBD loss according to state density. Extensive experiments on multiple D4RL datasets demonstrate that SDW substantially improves OBD performance, particularly in scenarios where original datasets exhibit limited state diversity.

\appendices
\section{Proofs}
\label{app:proof}

\subsection{Proof of Theorem \ref{thm:expert}}
\label{app:proof thm1}
\begin{proof}
According to the expected return  $$J({\pi}) = \sum_{t=1}^T \mathbb{E}_{s\sim d_{{\pi}}^t, a\sim {\pi}}[r(s,a)],$$ we have
\begin{align}
    \vert & J(\pi^\ast) - J(\hat{\pi}) \vert  \nonumber \\
    & \leq \sum_{t=1}^T \left\vert \mathbb{E}_{s\sim d_{\pi^\ast}^t, a\sim \pi^\ast}[r(s,a)] - \mathbb{E}_{s\sim d_{\hat{\pi}}^t, a\sim \hat{\pi}}[r(s,a)] \right\vert \nonumber
\end{align}

For specific step $t$,
\begin{align}
    & \left\vert \mathbb{E}_{s\sim d_{\pi^\ast}^t, a\sim \pi^\ast}[r(s,a)] - \mathbb{E}_{s\sim d_{\hat{\pi}}^t, a\sim \hat{\pi}}[r(s,a)] \right\vert \nonumber \\
    &\leq  \underbrace{\left\vert \mathbb{E}_{s\sim d_{\pi^\ast}^t, a\sim \pi^\ast}[r(s,a)] \blue{- \mathbb{E}_{s\sim d_{\pi^\ast}^t, a\sim \hat{\pi}}[r(s,a)]}\right\vert}_{A(t)} \nonumber \\
    & \quad  \blue{+} \underbrace{\left\vert \blue{ \mathbb{E}_{s\sim d_{\pi^\ast}^t, a\sim \hat{\pi}}[r(s,a)]} - \mathbb{E}_{s\sim d_{\hat{\pi}}^t, a\sim \hat{\pi}}[r(s,a)]\right\vert}_{B(t)} \nonumber
\end{align}

For $A(t)$, we have
\begin{align}
    A(t) &= \left\vert \mathbb{E}_{s\sim d_{\pi^\ast}^t}[\sum_{a\in \mathcal{A}} (\pi^\ast(a|s) - \hat{\pi}(a|s))r(s,a)] \right\vert \nonumber \\
    &\leq \mathbb{E}_{s\sim d_{\pi^\ast}^t}[\sum_{a\in \mathcal{A}} \left\vert \pi^\ast(a|s) - \hat{\pi}(a|s)\right\vert r(s,a)]  \nonumber \\
    &\leq \mathbb{E}_{s\sim d_{\pi^\ast}^t}[\sum_{a\in \mathcal{A}} \left\vert \pi^\ast(a|s) - \hat{\pi}(a|s)\right\vert] R_{\max}  \nonumber \\
    &= \epsilon_t R_{\max} \nonumber
\end{align}
For $B(t)$, with $d_{\pi}^t(s) = \sum_{s',a' \in \mathcal{S}\times \mathcal{A}}d_{\pi}^{t-1}(s') \pi(a'|s') T(s|s',a')$ and $T(s|s',a') \in [0,1]$, we have
\begin{align}
\label{eq:bt}
    B(t) &= \left\vert \sum_{s\in\mathcal{S}} \left(d_{\pi^\ast}^t(s) - d_{\hat{\pi}}^t(s)\right) \mathbb{E}_{a\sim \hat{\pi}}[r(s,a)] \right\vert  \nonumber\\
    &\leq  R_{\max} \sum_{s\in\mathcal{S}} \left\vert d_{\pi^\ast}^t(s) - d_{\hat{\pi}}^t(s)\right\vert 
\end{align}
For the term $C(t)=\sum_{s\in\mathcal{S}} \left\vert d_{\pi^\ast}^t(s) - d_{\hat{\pi}}^t(s)\right\vert$, we have
\begin{align}
\label{eq:iterative}
    C(t) & = \sum_{s\in\mathcal{S}} \left\vert d_{\pi^\ast}^t(s) - d_{\hat{\pi}}^t(s)\right\vert \nonumber\\
    & \leq  \sum_{s',a' \in \mathcal{S}\times \mathcal{A}} \sum_{s\in\mathcal{S}}   \left\vert d_{\pi^\ast}^{t-1}(s') \pi^\ast(a'|s') - d_{\hat{\pi}}^{t-1}(s') \hat{\pi}(a'|s') \right\vert T(s|s',a')   \nonumber\\
    & =  \sum_{s,a \in \mathcal{S}\times \mathcal{A}} \left\vert d_{\pi^\ast}^{t-1}(s) \pi^\ast(a|s) - d_{\hat{\pi}}^{t-1}(s) \hat{\pi}(a|s) \right\vert \nonumber\\
    & =  \sum_{s,a \in \mathcal{S}\times \mathcal{A}} \left\vert d_{\pi^\ast}^{t-1}(s) \pi^\ast(a|s) \blue{ - d_{\pi^\ast}^{t-1}(s) \hat{\pi}(a|s) + d_{\pi^\ast}^{t-1}(s) \hat{\pi}(a|s)} \right.\nonumber \\
    & \qquad \qquad \qquad \qquad \qquad \qquad\qquad\qquad\qquad\quad \left. -  d_{\hat{\pi}}^{t-1}(s) \hat{\pi}(a|s) \right\vert \nonumber \\
    & \leq  ( \mathbb{E}_{s\sim d_{\pi^\ast}^{t-1}}[\sum_{a\in \mathcal{A}} \left\vert \pi^\ast(a|s) - \hat{\pi}(a|s) \right\vert] + \sum_{s \in \mathcal{S}} \left\vert d_{\pi^\ast}^{t-1}(s) - d_{\hat{\pi}}^{t-1}(s) \right\vert \nonumber \\
    & \leq \epsilon_{t-1}  +  \sum_{s \in \mathcal{S}} \left\vert d_{\pi^\ast}^{t-1}(s) - d_{\hat{\pi}}^{t-1}(s) \right\vert \nonumber \\
    & \leq  \epsilon_{t-1}  + C(t-1) 
\end{align}
Plugging the iterative formula of Eq. \ref{eq:iterative} into Eq. \ref{eq:bt} yields $B(t) \leq \sum_{i=1}^{t-1} \epsilon_i  R_{\max}$.
Therefore,
\begin{align}
    | J(\pi^\ast) - J(\hat{\pi})|  &\leq \sum_{t=1}^T A(t) + B(t) \nonumber \\
    & = \sum_{t=1}^T \sum_{i=1}^t \epsilon_i R_{\max} \nonumber \\
    & \leq T \sum_{t=1}^T \epsilon_t R_{\max} \nonumber \\
    & = \epsilon T^2 R_{\max}
\end{align}
The proof is completed.
\end{proof}

\subsection{Proof of Corollary \ref{corollary:1}}
\label{app:proof corollary}
We first derive the lemma that bounds the expected risk via empirical risk by following the proof of Theorem 2.2 in \citep{mohri2018foundations}.
\begin{lemma}
\label{lemma:bound}
The expert dataset $\{(s_i, \pi^\ast(s_i))\}_{i=1}^m$ are i.i.d. drawn from $d_{\pi^\ast}(s)$. Suppose $\pi^\ast$ and $\hat{\pi}$ are deterministic and the provided function class $\Pi$ has finite $|\Pi|$. Let $R(\pi) = \mathbb{E}_{s\sim d_{\pi^\ast}}\left[ \mathbb{I}\left(  \hat{\pi}(s) \neq \pi^*(s) \right)\right]$. If $\widehat{R}(\pi) = \frac{1}{m}\sum_{i=1}^m \mathbb{I}\left(  \hat{\pi}(s_i) \neq \pi^*(s_i) \right)    \leq \epsilon$, then, with the probability of at least $1-\delta$, the following inequality holds:
    \begin{equation}
        \forall \pi \in \Pi, \quad R(\pi) \leq \widehat{R}(\pi) + \sqrt{\frac{\log |\Pi|+\log \frac{2}{\delta}}{2 m}} . \nonumber 
    \end{equation}
\end{lemma}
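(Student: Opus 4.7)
The plan is to derive this classical finite-hypothesis-class generalization bound by combining Hoeffding's inequality for a single hypothesis with a union bound over the finite class $\Pi$; this is essentially Theorem 2.2 of \citep{mohri2018foundations}, as the lemma statement itself acknowledges, so I would aim for a clean, textbook-style reproduction rather than any new technique.

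First I would fix an arbitrary $\pi \in \Pi$ and introduce the i.i.d.\ Bernoulli random variables $X_i(\pi) = \mathbb{I}(\pi(s_i) \neq \pi^*(s_i)) \in \{0,1\}$. The i.i.d.\ assumption on $s_i \sim d_{\pi^*}$, together with determinism of $\pi$ and $\pi^*$, makes the common mean exactly $R(\pi)$ and the sample mean exactly $\widehat{R}(\pi)$. Because each $X_i(\pi) \in [0,1]$, Hoeffding's inequality applied to their average yields, for every $\epsilon>0$,
$$\Pr\bigl[|\widehat{R}(\pi) - R(\pi)| > \epsilon\bigr] \;\le\; 2\exp(-2m\epsilon^2).$$
A union bound over the $|\Pi|$ hypotheses then upgrades this pointwise tail bound to a uniform one:
$$\Pr\bigl[\exists\,\pi \in \Pi : |\widehat{R}(\pi) - R(\pi)| > \epsilon\bigr] \;\le\; 2|\Pi|\exp(-2m\epsilon^2).$$
Setting the right-hand side equal to $\delta$ and solving for $\epsilon$ produces the threshold $\epsilon = \sqrt{(\log|\Pi| + \log(2/\delta))/(2m)}$, so that with probability at least $1-\delta$ the inequality $|\widehat{R}(\pi) - R(\pi)| \le \epsilon$ holds simultaneously for every $\pi \in \Pi$; in particular $R(\pi) \le \widehat{R}(\pi) + \epsilon$, which is the claim.

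There is no genuine technical obstacle here, only bookkeeping. The two ingredients that must be watched are (i) the finiteness of $\Pi$, which is precisely what makes the union bound non-vacuous, and (ii) the i.i.d.\ sampling of $s_i$ from $d_{\pi^*}$, which is what legitimizes the direct application of Hoeffding to the sample mean. The factor of $2$ inside $\log(2/\delta)$ originates from invoking the two-sided Hoeffding tail; a one-sided argument would replace it with $\log(1/\delta)$, but since Corollary \ref{corollary:1} only uses one direction of the deviation, the slightly looser constant is harmless and matches the form stated in the lemma.
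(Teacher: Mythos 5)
Your proposal is correct and follows essentially the same route as the paper's own proof: Hoeffding's inequality applied to the $[0,1]$-valued indicator losses for a fixed hypothesis, a union bound over the finite class $\Pi$, and solving $2|\Pi|\exp(-2m\zeta^2)=\delta$ for the deviation threshold. Your added remarks on where the finiteness and i.i.d.\ assumptions enter, and on the origin of the factor $2$ in $\log\frac{2}{\delta}$, are accurate but do not change the argument.
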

\begin{proof}
The expected risk and empirical risk are defined as
\begin{align}
    & R(\pi) = \mathbb{E}_{(s)\sim d_{\pi^\ast}}\left[ \mathbb{I}\left(  \hat{\pi}(s) \neq \pi^*(s) \right)\right], \nonumber \\
    & \widehat{R}(\pi) = \frac{1}{m}\sum_{i=1}^m \mathbb{I}\left(  \hat{\pi}(s_i) \neq \pi^*(s_i) \right) . \nonumber
\end{align}
For all $s \in \mathcal{S}$, $  \mathbb{I}\left(  \hat{\pi}(s) \neq \pi^*(s) \right) \in [0, 1]$ holds. By applying Hoeffding's inequality, we have
\begin{equation}
\label{eq:single hypothesis}
    \Pr[\vert R(\pi) - \widehat{R}(\pi) \vert \geq \zeta] \leq 2\exp(-{ 2 m \zeta^2}).
\end{equation}
Let $\pi_1, \pi_2, \cdots, \pi_{|\Pi|}$ be the element of $|\Pi|$. Using the union bound and applying Eq. \ref{eq:single hypothesis} yield:
\begin{align}
    &\Pr[\exists \pi |R(\pi) - \widehat{R}(\pi)|>\zeta] \nonumber \\
    &= \Pr[(|R(\pi_1) - \widehat{R}(\pi_1)|>\zeta) \vee ... \vee (|R(\pi_{|\Pi|}) - \widehat{R}(\pi_{|\Pi|})|>\zeta)] \nonumber \\
    &\leq \sum_{\pi\in \Pi} \Pr[|R(\pi) - \widehat{R}(\pi)|>\zeta] \nonumber \\
    &\leq 2|\Pi|\exp(-{ 2 m \zeta^2}). \nonumber
\end{align}
Setting the right-hand side to be equal to $\delta$ complete the proof.
\end{proof}
Combining Theorem \ref{thm:3} and Lemma \ref{lemma:bound}  completes the proof.

\begin{table*}[ht]
\scriptsize
    \centering
    \caption{Offline behavior distillation performance on D4RL offline datasets.}
    \vspace{-0.1cm}
    \setlength{\tabcolsep}{10pt} 
    \resizebox{0.8\linewidth}{!}{
    \begin{tabular}{c  cc  cc  cc c}
    \toprule
     \multirow{2}{*}{Method}  & \multicolumn{2}{c}{Halfcheetach} & \multicolumn{2}{c}{Hopper} & \multicolumn{2}{c}{Walker2D} & \multirow{2}{*}{Average} \\
     & M & M-E & M & M-E & M & M-E \\
     \specialrule{0.7pt}{0.2\jot}{0.2pc}
     Top Reward &  $1.6$ & $0.2$  & $6.1$ & $3.8$ &  $1.4$ & $1.5$ & $2.4$   \\
     Top q-value &  $1.8$ & $3.8$  & $7.0$ & $8.3$ &  $6.6$ & $4.9$ & $5.4$   \\
    \bottomrule
    \end{tabular}
    }
    \label{tab:more result}
\end{table*}

\begin{table*}[ht]
\scriptsize
    \centering
    \caption{The size and required training steps for convergence for each offline dataset. M denotes the million for simplicity. The size and step for synthetic data (Synset) are listed in the last column.}
    \resizebox{0.8\linewidth}{!}{
    \begin{tabular}{c  ccc  ccc  ccc c}
    \toprule
     \multirow{2}{*}{ }  & \multicolumn{3}{c}{Halfcheetach} & \multicolumn{3}{c}{Hopper} & \multicolumn{3}{c}{Walker2D} & \multirow{2}{*}{Synset}\\
     & M-R & M & M-E & M-R & M & M-E & M-R & M & M-E \\
     \specialrule{1pt}{0.2\jot}{0.2pc}
     
     Size & $0.2$M & $1$M & $2$M & $0.4$M &  $1$M & $2$M & $0.3$M &  $1$M & $2$M & $256$ \\
     \specialrule{0.3pt}{0.2\jot}{0.2pc}
     Step (k) & $40$ & $25$ & $100$ & $80$ & $50$ & $100$ & $60$ & $50$ & $125$  & $0.1$ \\
    \bottomrule
    \end{tabular}
    }
    \label{tab:training steps}
\end{table*}

\section{Implementation Details}
\label{app:implementation details}

This section provides all the additional implementation details of our experiments.

\paragraph{Normalized Loss vs. Normalized Return (Figure \ref{figure:error_vs_return}).} For policies trained on the original datasets, the total training epoch is $50,000$. The policy return and loss are evaluated during the training process. For policies trained on distilled datasets, the training epoch is $100$, and policies are only evaluated when the training is finished. The distilled datasets are generated using Av-PBC \citep{lei2024offline} every $1,000$ distillation steps. For specific \texttt{M-R} or \texttt{M-E} cases, after obtaining the training loss of policies trained on both the original and corresponding distilled datasets, the normalized loss is calculated by subtracting the minimum value among these losses. This normalization ensures that the \texttt{M-R} and \texttt{M-E} cases have the same convergence loss, enabling a fair comparison between \texttt{M-R} and \texttt{M-E}.


\paragraph{OBD Settings} The policy network is a 4-layer multilayer perceptron (MLP) with a width of $256$.  The synthetic data are initialized by randomly selecting $N_\texttt{syn}$ state-action pairs from the offline data. For DBC and PBC, the distillation step $T_\texttt{out}$ is set to $200$k for \texttt{Halfcheetah} and \texttt{Walker2D} and $50$k for \texttt{Hopper}, respectively. For Av-PBC and SDW, the distillation step $T_\texttt{out}$ is set to $50$k for \texttt{Halfcheetah} and \texttt{Walker2D} and $20$k for \texttt{Hopper}, respectively. The inner loop step $T_\texttt{in}$ is set to $100$.


\paragraph{Offline RL Policy Training} We use the advanced offline RL algorithm of Cal-QL \citep{nakamoto2023calql} to extract the expert policy $\pi^*$ and corresponding q value function $q_{\pi^*}$ from suboptimal offline data, and the implementation in \citep{tarasov2022corl} is employed in our experiments with default hyper-parameter setting.

\paragraph{Cross-architecture Experiments.} The width of MLPs are both $256$. The residual MLP is a 4-layer MLP, and the intermediate layers are packaged into the residual block.

\section{More Performance Comparison}
We selected a small subset of data with the same budget of $256$ based on reward (Top Reward) and q-value (Top q-value). The results, presented in the Table \ref{tab:more result}, show that both Top Reward and Top q-value perform significantly worse than random selection, indicating that simply filtering behavioral data based on reward or q-value may not be effective. A possible explanation is that selecting data based on a single criterion, such as reward or q-value, overlooks important factors like state diversity and long-horizon returns. In contrast, our OBD algorithm, SDW, effectively addresses these issues and synthesizes a compact yet informative behavioral dataset.

\section{Training Time Comparison}
\label{app:training time}
For the whole original data, offline RL algorithms require dozens of hours. Therefore, we solely compare the training time of BC on synthetic data and BC on original data. Because the training time varies with GPU models (NVIDIA V100 used in our experiments), we report the optimization step, which has a linear relationship to training time, required for training convergence for each original dataset, as shown in Table \ref{tab:training steps}.



%

\bibliographystyle{IEEEtranN}
\bibliography{ref}

%




\end{document}